\newcommand{\C}{\mathcal{C}}
\newcommand{\F}{\mathcal{F}}
\newcommand{\K}{\mathcal{K}}
\newcommand{\aobj}[2]{a_{#1}^{#2}}
\newcommand{\alp}[2]{\alpha_{#1}({#2})}
\newcommand{\alpO}[2]{\alpha_{#1}(O_{#2})}
\newcommand{\cost}[2]{c_{#1}({#2})}
\newcommand{\costO}[2]{c_{#1}({O_{#2}})}
\newtheorem{theorem}{Theorem}[section]
\newtheorem{corollary}{Corollary}[theorem]
\newtheorem{lemma}[theorem]{Lemma}
\newtheorem{proposition}{Proposition}[section]
\title{Optimizing Multiple Simultaneous Objectives for Voting and Facility Location}
\author{Yue Han}
\author{Christopher Jerrett}
\author{Elliot Anshelevich}
\affil{Department of Computer Science, Rensselaer Polytechnic Institute}
\begin{document}
\maketitle
\begin{abstract} We study the classic facility location setting, where we are given $n$ clients and $m$ possible facility locations in some arbitrary metric space, and want to choose a location to build a facility. The exact same setting also arises in spatial social choice,
where voters are the clients and the goal is to choose a candidate or outcome, with the distance from a voter to an outcome representing the cost of this outcome for the voter (e.g., based on their ideological differences). Unlike most previous work, we do not focus on a single objective to optimize (e.g., the total distance from clients to the facility, or the maximum distance, etc.), but instead attempt to optimize several different objectives {\em simultaneously}. More specifically, we consider the $l$-{\em centrum} family of objectives, which includes the total distance, max distance, and many others. We present tight bounds on how well any pair of such objectives (e.g., max and sum) can be simultaneously approximated compared to their optimum outcomes. In particular, we show that for any such pair of objectives, it is always possible to choose an outcome which simultaneously approximates both objectives within a factor of $1+\sqrt{2}$, and give a precise characterization of how this factor improves as the two objectives being optimized become more similar. For $q>2$ different centrum objectives, we show that it is always possible to approximate all $q$ of these objectives within a small constant, and that this constant approaches 3 as $q\rightarrow \infty$. Our results show that when optimizing only a few simultaneous objectives, it is always possible to form an outcome which is a significantly better than 3 approximation for all of these objectives.
\end{abstract}

\section{Introduction}
\label{section:intro}

When working on optimization problems, it is often difficult to pick one single objective to optimize: in most real applications different parties care about many different objectives at the same time. For example, consider the classic setting when we are given $n$ clients and $m$ possible facility locations in some metric space, and want to choose a location to build a facility. Note that while the setting we consider is for facility location problems, the exact same setting also arises in spatial social choice (see e.g., \cite{merrill1999unified,enelow1984spatial,anshelevich2021distortion}),
where voters are the clients and the goal is to choose a candidate or outcome located in some metric space, where the distance from a voter to an outcome represents the cost of this outcome for the voter (e.g., based on their ideological differences). When choosing where to build a facility (or which candidate to select) for the public good (e.g., where to build a new post office, supermarket, etc.), we may care about minimizing the average distance from users to the chosen facility (a utilitarian measure), or the maximum distance (an egalitarian measure), or many other measures of fairness or happiness. Focusing on just a single measure may not be useful for actual policy makers, who often want to satisfy multiple objectives simultaneously and in fact refuse to commit themselves to a single one, as many objectives have their own unique merits. In this paper we instead attempt to simultaneously minimize multiple objectives. 
For example, what if we care about {\em both} the average and the maximum distance to the chosen facility, and not just about some linear combination of the two? What if we want to choose a facility so that it is close to optimum in terms of the average distance from the users, and {\em at the same time} is also close to optimum in terms of the maximum distance? Is this even possible to do?

More specifically, we consider \emph{l-centrum} problems \cite{slater1978centers, tamir2001k, peeters1998some}, where we are given a set of possible facilities $\F$ and a set of $n$ clients $\C$ in an arbitrary metric space with distance function $d$. For each client $i \in \C$ there is a cost $d(i, j)$ if we choose to build facility $j \in \F$. Then the goal is to pick one facility from $\F$ such that it minimizes the sum of the $l$ most expensive costs induced by the choice of facility location. 
Such problems generalize minimizing the total client cost ($l=n$), as well as the maximum client cost ($l=1$). The latter may be considered a measure which is more fair to all the clients (since it makes sure that {\em all} clients have small cost, not just on average), but would have the drawback that a solution where all except a single client have low cost would be considered the same as a solution where they all have high cost, as long as the maximum cost stays the same. Because of this, some may argue that an objective where we consider only the costs of the worst 10 percent of the clients may be better. In this work, we side-step questions about which objective is best entirely. Since each of the \emph{l-centrum} objectives has its own advantages, our goal is to simultaneously approximate multiple such objectives. This idea of simultaneously approximating \emph{l-centrum} problems as a method of creating ``fair" outcomes was previously discussed in \cite{kumar2006fairness, goel2006simultaneous, goel2018relating}, and was adapted from the idea of \emph{approximate majorization} in \cite{bhargava2001using}.
 
Note that our approach is very different from combining several objectives into a single one (e.g., by taking a weighted sum); we instead want to make sure that the chosen outcome is good with respect to each objective we are interested in simultaneously. 
More formally, for $1 \leq l \leq n$, we define the cost function for choosing facility $A \in \F$ to be $\cost{l}{A}$, which is the sum of the top $l$ distances from $A$ to each client in $\C$. The $l$-{\em centrum} problem asks to minimize $\cost{l}{A}$ with a fixed $l$ value; denote the optimal facility location for this objective by $O_l$. Now suppose we have $q$ such objectives that we want to optimize, such that $l \in \K = \{k_1, k_2, \cdots, k_q\}$. We then say a facility $A \in \F$ is a simultaneous $\alpha$-approximation for all of the $q$ objectives iff $\cost{l}{A} \leq \alpha \cdot \costO{l}{l}$ for all $l \in \K$. 

\subsection{Our Contributions}

We first consider the setting where we attempt to optimize two objectives. These objectives could, for example, be minimizing the maximum distance $\max_i d(i,j)$ and the total distance $\sum_i d(i,j)$. Or more generally they can be two arbitrary centrum objectives with one objective being \emph{k-centrum} and the other being \emph{p-centrum} with $k \leq p$. We prove that for any such pair of objectives, it is always possible to choose an outcome $A\in \F$ which simultaneously approximates both objectives within a factor of $1+\sqrt{2}$. In fact, we provide a tight upper bound for how well any such pair of objectives can be approximated at the same time, as shown in Figure \ref{fig:2_ub}. Our results show that when two people disagree on which objective is the best to optimize, they can both be made relatively happy in our setting. 

\begin{figure}[t]
  \begin{center}
      \begin{tikzpicture}
        \begin{axis}[
            xmin = 1, xmax = 20,
            ymin = 1, ymax = 3.0,
            restrict y to domain=1:3,
            xlabel     = $\frac{p}{k}$,
            ylabel     = $f$,
            clip       = false,
            ylabel style={rotate=-90},
            axis lines*=left
            ],
            \addplot[
            domain = 1:4,
            samples = 500,
            smooth,
            thick,
            blue,
            ] {sqrt(x)};
            \addplot[
            domain = 4:20,
            samples = 1000,
            smooth,
            thick,
            blue,
            ] {1 - 1/x + sqrt(1/x^2 - 2/x + 2)};
            \addplot[
            domain = 1:20,
            samples = 1000,
            smooth,
            dashed,
            orange,
            thick,
            ] {1 + sqrt(2)};
            \node[font=\tiny,pin= above:{$f(4) = 2$}] at (axis cs:4,2) {};
            \node[font=\tiny,pin= above right:{$f(1) = 1$}] at (axis cs:1,1) {};
            \node[font=\tiny,pin= above:{$1 + \sqrt{2}$}] at (axis cs:10,2.414) {};
        \end{axis}
    \end{tikzpicture}
  \caption{Plot of the tight upper bound of the simultaneous approximation ratio for $c_k$ and $c_p$ (denoted as function $f$) with respect to $\frac{p}{k}$. When $k$ and $p$ are similar this factor is small, and in fact when $p\leq 4k$ they can both be approximated within a factor of 2. As $\frac{p}{k}$ becomes larger, the worst-case approximation approaches $1+\sqrt{2}$.}
  \label{fig:2_ub}
  \end{center}
\end{figure}
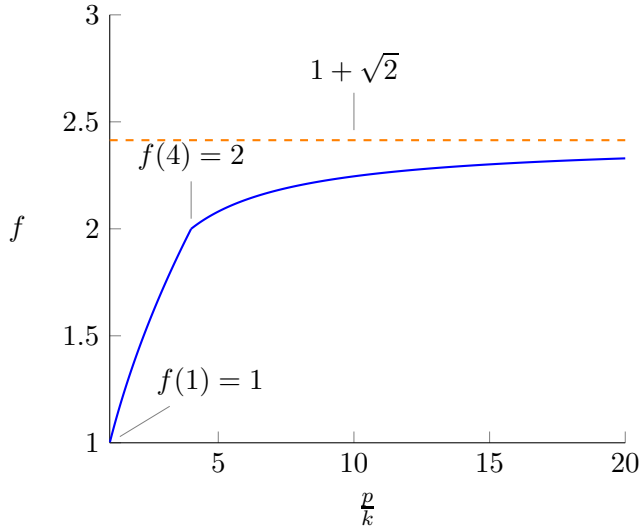

We then proceed to study optimizing more than two objectives at the same time. When optimizing $q$ different centrum objectives, we prove that it is always possible to approximate all $q$ of these objectives within a small constant; the plot of this upper bound is shown in Figure \ref{fig:mul_ub}. When $q=2$ this factor coincides with our above bound for 2 objectives; as the number of objectives grows this value approaches 3. Thus our results show, that when optimizing only a few simultaneous objectives, it is always possible to form an outcome which is a significantly better than 3 approximation for all of these objectives.\footnote{If the goal is to approximate {\em all} the centrum objectives at the same time, then \cite{kumar2006fairness} provided a 4-approximation for this, and the results in \cite{goel2018relating,gkatzelis2020resolving} imply that this can be improved to 3. 
}

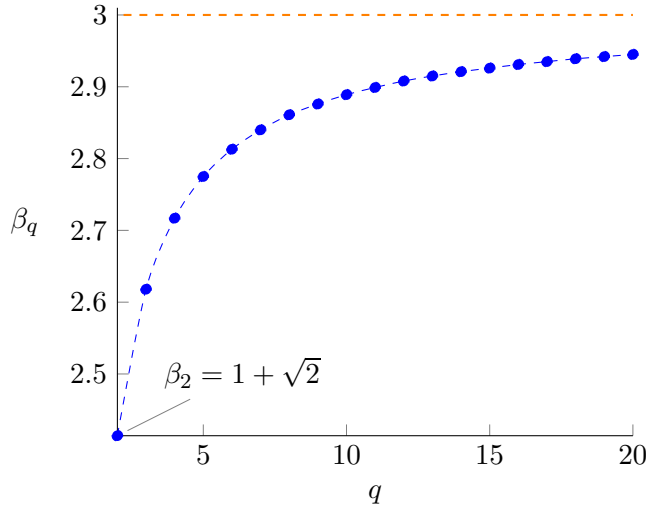
\begin{figure}[t]
  \begin{center}
  \begin{tikzpicture}
    \begin{axis}[
        xmin = 2, xmax = 20,
        ymin = 2.414, ymax = 3.01,
        restrict y to domain=1:3,
        xlabel     = $q$,
        ylabel     = $\beta_q$,
        ylabel style={rotate=-90},
        axis lines*=left
        ],
        \addplot[
            smooth,
            blue,
            mark=*,
            dashed
        ] file[] {plot2data.dat};
        \addplot[
            domain = 0:20,
            samples = 1000,
            smooth,
            dashed,
            orange,
            thick
        ] {3};
        \node[font=\tiny,pin= above right:{$\beta_2 = 1 + \sqrt{2}$}] at (axis cs:2,2.414) {};
    \end{axis}
\end{tikzpicture}
  \caption{Plot of upper bound of the simultaneous approximation ratio $\beta_q$ for $q$ different centrum objectives.
  }
  \label{fig:mul_ub}
  \end{center}
\end{figure}


Finally, in Section \ref{section:same2} we discuss the important special case when facilities can be placed at any client location, i.e., $\F=\C$, which for the social choice setting corresponds to all the voters also being possible candidates.

\subsection{Related Work}

Facility location, as well as spatial voting problems, are a huge area with far too much existing work to survey here; many variants have been studied with many different objectives (see for example \cite{chan2021mechanism, farahani2010multiple} and the references therein). 
As discussed above, we want to simultaneously approximate multiple objectives. However, there are many other different approaches when considering multiple objectives, with a common one being converting multiple objectives into one objective and optimizing the new objective, as in \cite{ehrgott2000survey, farahani2010multiple}. As discussed in Section 5.1 in \cite{ehrgott2000survey}, the most commonly used conversion is the weighted sum of the objectives (such as in, e.g., \cite{alamdari2017bicriteria, mcginnis1978single, ohsawa1999geometrical}), but of course there are many ways to combine several objectives. Since different people have different opinions and priorities, it is usually impossible to pick one combination that can make everyone satisfied.
%
In addition, a good outcome for the new (combined) objective does not directly imply it is also good with respect to each of the original objectives, and in fact simultaneously approximating several objectives may be impossible even if the combined objective is approximable. 
On the other hand, a simultaneous $\alpha$-approximation for all the objectives {\em does} imply an $\alpha$-approximation of any convex combinations of the objectives, and thus forms a strictly stronger result.

In addition to forming a weighted sum of two objectives, 
\cite{alamdari2017bicriteria} looked at a different notion of approximation for multiple objectives. They considered approximating the original objectives with respect to a specific feasible solution instead of their respective optimal solutions, and gave a polynomial time (4,8)-approximation algorithm for minimax and minisum for the special case when $\F = \C$. A simultaneous $\alpha$-approximation for both minimax and minisum (which is what we form for $\alpha=1+\sqrt{2}$) would also imply a $(\alpha, \alpha)$-approximation in their notion of approximation. Note that the result in \cite{alamdari2017bicriteria} applies to selecting more than one facility, however, while in that setting we know it is impossible to form a bounded approximation ratio for both minimax and minisum simultaneously \cite{alamdari2017bicriteria, goel2006simultaneous}. Because of this, similarly to much of the work on this topic (see below), we focus on selecting a single facility while optimizing multiple simultaneous objectives.
When considering multiple objectives, another approach is to consider efficient algorithms to find the Pareto optimal set\footnote{Such set contains Pareto optimal solutions, or efficient solutions such that in any of these solutions, none of the objectives can be improved without simultaneously worsening any other objectives \cite{ehrgott2000survey, farahani2010multiple}.} as discussed in \cite{ehrgott2000survey}, with specific examples such as \cite{nickel2005multicriteria, roostapour2015deterministic} for placing one facility. However, this approach generally does not consider how good those solutions are in comparison to each of the objectives, nor if there exists one that is good for all the objectives. 

Now that we have discussed different approaches for multiple objective optimization, we want to consider a set of objectives that is appropriate for our setting. Two of the most commonly studied objectives for facility location problems are minimax (minimize maximum distance) and minisum (minimize sum of distances), see the survey \cite{chan2021mechanism}. In this work we study a more general version of these two objectives named the \emph{l-centrum} objectives. \emph{l-centrum} problems were first introduced by \cite{slater1978centers} and were later studied in other literature such as \cite{tamir2001k, peeters1998some}. This set of problems is also a subset of the \emph{ordered k-median} problems \cite{sornat2019approximation, chakrabarty2017interpolating, chakrabarty2019approximation, byrka2018constant, kalcsics2002algorithmic}. In fact, \emph{ordered k-median} objectives can be represented as convex combinations of the \emph{l-centrum} objectives as discussed in \cite{sornat2019approximation, chakrabarty2017interpolating, chakrabarty2019approximation}. This means that if we were to combine any \emph{l-centrum} objectives by convex combination into a new objective, \cite{chakrabarty2019approximation} gives a $(5+\epsilon)$ approximation for this new objective.

Note that the above work on \emph{l-centrum} problems only considered the approximation ratio for a single objective at a time, but our goal is to approximate multiple objectives simultaneously. 
For this goal, \cite{kumar2006fairness} showed the existence of a simultaneous 4-approximation for all \emph{l-centrum} objectives, and the results in \cite{goel2018relating, gkatzelis2020resolving} imply a similar 3-approximation.
We provide a much simpler mechanism for obtaining this 3-approximation than the one in \cite{gkatzelis2020resolving}.\footnote{Although mostly concerned with different questions, the 3-approximation mechanism we use can also be obtained as an easy consequence of the results in \cite{goel2018relating}.}
However, our main focus is on improving the upper bound with respect to the {\em number} of \emph{l-centrum} objectives to be simultaneously approximated. We provide better approximations when the number of objectives to be approximated is small (instead of being {\em all} the \emph{l-centrum} objectives), with a much more detailed tight analysis for two objectives.  

Similar questions about facility location and voting have also been studied in mechanism design. 
For instance, there has been a significant amount of work in mechanism design considering the approximation ratio for strategy-proof mechanisms for placing a single facility \cite{walsh2021strategy, alon2010strategyproof, feldman2016voting, tang2020mechanism}. Note that much of the previous work in this area studied only the 1D real line metric (e.g., \cite{walsh2021strategy, aziz2021strategyproof}), while we look at general arbitrary metric spaces. For simultaneously approximating two objectives, \cite{walsh2021strategy} showed that it is always possible to obtain a (3,3)-approximation for minimax and minisum for clients and facilities on a line. In addition, they also showed that no deterministic and strategy-proof mechanism can do better than 3-approximation for either of the two objectives. 

Finally, our work also applies to spatial voting instead of facility location, where voters and candidates are located in a metric space, and the goal is to choose a candidate which minimizes some objectives over the voters \cite{merrill1999unified,enelow1984spatial}. Perhaps the most relevant work to ours in this space is the work on {\em distortion}, where instead of knowing the voters' exact locations, each voter only provides their ordinal preferences for the candidates (see the survey \cite{anshelevich2021distortion}). As part of this work,  \cite{anshelevich2017randomized,feldman2016voting} showed that Random Dictatorship has an approximation ratio of 3 in a general metric space for minisum. More generally, the results in \cite{gkatzelis2020resolving} imply a simultaneous 3-approximation for all $l$-centrum objectives, by choosing a candidate using a somewhat complex, but deterministic, mechanism.
One of our goals is to improve this upper bound of 3 for simultaneously approximating multiple objectives. Because of this, just as in most work on distortion, our main focus is not on strategyproofness. Instead we study how well multiple objectives can be approximated simultaneously, even if we are given all the necessary information.

\section{Preliminaries and Notation}
\label{section:prelim}
Consider the facility location problem where we are given the set of client locations $\C$ of size $n$, and the set of possible facility locations $\F$ in a metric space $d$. We want to select a location $A\in\F$ to place a facility such that the placement would simultaneously minimize some set of objectives. The kind of objectives that we are particularly interested in is the summation of the top $k$ distances from the clients to the chosen facility location. More formally, suppose we order the $n$ clients $a^1, a^2, a^3, \ldots, a^n$ so that
$$d(a^1,A)\geq d(a^2,A)\geq \ldots\geq d(a^n,A).$$

Then, define the $k$-centrum objective $c_k(A)$ to be the cost for choosing facility location $A$ when considering the top $k$ client-facility distances, i.e., 
    	$$c_k(A) = \sum_{i = 1}^{k} d(a^i, A).$$
 Our goal is to minimize $c_k(A)$ for multiple $k$ simultaneously. Denote the optimal facility location for the $k$-centrum objective by $O_k$. It will be useful to denote by $\aobj{k}{i}$ the client that is $i$'th farthest from $O_k$, i.e.,  
$$d(\aobj{k}{1},O_k)\geq d(\aobj{k}{2},O_k)\geq\ldots\geq d(\aobj{k}{n},O_k).$$

We are given a set of objectives to minimize, represented by a a set of distinct positive integers $\K = \{k_1, k_2, \cdots, k_q\}$, with each of its elements less than or equal to $n$. This means that we want to simultaneously minimize all objectives $c_{k_i}$ for $k_i\in \K$. We slightly abuse notation and refer to $\K$ as the set of objectives, and say that an objective $c_k$ is in $\K$ when $k\in \K$. However, in order to simultaneously minimize the objectives in $\K$, we would have to make some trade-offs such that the chosen facility location may not be the optimal location for some of the objectives.\footnote{Here note that we assume $\cost{1}{A} > 0$; this means that the cost function is always positive. Otherwise we should just choose the facility location $A$ such that $\cost{1}{A} = 0$. This indicates that all clients are at most 0 distance away from $A$, which means that all clients are at the same location as $A$ given they are located in a metric space. In other words, $A$ would be the optimal facility location for all objectives in $\K$.} We thus define the approximation ratio for choosing facility location $A$ with respect to objective $c_k$ as 
    	$$\alpha_k(A) = \frac{c_k(A)}{c_k(O_k)} \geq 1$$
     Therefore, by choosing facility location $A$, we would obtain a $(\alpha_{k_1}(A), \alpha_{k_2}(A), \cdots, \alpha_{k_q}(A))$ approximation for minimizing the set of objectives $\K$. As discussed in the Introduction, our goal is to establish that we can always choose some $A$ so that all these values are small simultaneously.
    	
\section{Simultaneously Approximating Two Objectives}
\label{section:2_obj}
We will first consider the case where there are only two objectives. Let $|\C| = n, 1 \leq k < p \leq n$, we then wish to simultaneously minimize $c_k(A)$ and $c_p(A)$. Our goal is to find some $A \in \F$ such that both $\alpha_k(A)$ and $\alpha_p(A)$ are small. In fact, with this goal in mind, we can obtain the following result:\\

\begin{theorem}
For $1 \leq k < p \leq n$, given the optimal facility location $O_k$ that minimizes $c_k$ and the optimal facility location $O_p$ that minimizes $c_p$, we have that $\alpha_k(O_p) \leq \frac{1}{\alpha_p(O_k)} + 2$.
\label{theorem:2_gen}
\end{theorem}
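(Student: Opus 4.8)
The plan is to relate the three quantities $c_k(O_p)$, $c_k(O_k)$, $c_p(O_k)$, and $c_p(O_p)$ using the triangle inequality applied to the two optimal facilities $O_k$ and $O_p$. The natural strategy is to bound $c_k(O_p)$ by comparing it with $c_k(O_k)$ plus a correction term of the form $k \cdot d(O_k, O_p)$, since moving the facility from $O_k$ to $O_p$ changes each client's distance by at most $d(O_k, O_p)$. Then the key is to bound $d(O_k,O_p)$ itself: since $O_k$ and $O_p$ are optimal for their respective objectives, there must be clients that are ``far'' from at least one of them, and the distance between $O_k$ and $O_p$ cannot be too large relative to $c_k(O_k)$ and $c_p(O_p)$. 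I would set $D = d(O_k, O_p)$ and aim to show something like $k \cdot D \le c_k(O_k) + $ (a term comparable to the gap between $c_p(O_k)$ and $c_p(O_p)$), then divide through by $c_k(O_k)$.

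First I would establish the ``displacement'' bound: for any facility $A$ and any other facility $B$, $c_k(A) \le c_k(B) + k \cdot d(A,B)$. This follows because if $a^1,\dots,a^k$ are the $k$ farthest clients from $A$, then $c_k(A) = \sum_{i=1}^k d(a^i, A) \le \sum_{i=1}^k \big(d(a^i,B) + d(A,B)\big) \le c_k(B) + k\cdot d(A,B)$, where the last step uses that $\sum_{i=1}^k d(a^i,B)$ is at most the sum of the top $k$ distances from $B$, which is $c_k(B)$. Applying this with $A = O_p$, $B = O_k$ gives $c_k(O_p) \le c_k(O_k) + k\cdot D$.

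Next I need an upper bound on $k \cdot D$ in terms of the other costs. The idea is to use the suboptimality of $O_k$ with respect to the $p$-centrum objective: $c_p(O_p) \le c_p(O_k)$, which on its own says nothing, so instead I would go the other way and use that $O_k$ being far from $O_p$ forces $c_p(O_k)$ to be large, while $c_k(O_k)$ controls the contribution of the top $k$ clients. Concretely, consider the clients $a^1_p, \dots, a^p_p$ that are the $p$ farthest from $O_p$ (the witnesses for $c_p(O_p)$); for the indices $i$ beyond the top $k$ with respect to $O_k$, the distance $d(a^i, O_k)$ is small, so by triangle inequality $d(a^i, O_p) \le d(a^i,O_k) + D$, which when there are enough such clients close to $O_k$ but contributing to $c_p$ forces a relationship. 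More cleanly: I expect the right approach is to observe $c_p(O_k) \ge c_p(O_p) \ge$ (contribution of top $p$ clients w.r.t.\ $O_p$), and separately bound how $D$ relates to $c_k(O_k)$ by noting that if $D$ were huge then even the $k$-th farthest client from $O_k$ would be forced to be far, contradicting optimality of $O_k$ — but making this quantitative is the crux. I anticipate the cleanest route is: $k\cdot D \le c_k(O_p)$ would be too strong, so instead bound $k \cdot D \le$ something like $c_p(O_k) - c_p(O_p) + c_k(O_k)$ or similar, then combine with the displacement bound and divide by $c_k(O_k)$.

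The main obstacle will be the second bounding step — getting a clean inequality controlling $k\cdot d(O_k,O_p)$ — because it requires carefully choosing which set of clients to look at (the top-$k$ witnesses of one facility versus the top-$p$ witnesses of the other) and exploiting the optimality of \emph{both} $O_k$ and $O_p$ simultaneously rather than just one. Once the displacement bound $c_k(O_p)\le c_k(O_k)+kD$ and a bound of the rough form $kD \le c_k(O_k) + (c_p(O_k) - c_p(O_p))$ are in hand, dividing by $c_k(O_k) = c_k(O_k)$ and using $\alpha_p(O_k) = c_p(O_k)/c_p(O_p)$, $\alpha_k(O_p) = c_k(O_p)/c_k(O_k)$ should yield $\alpha_k(O_p) \le 2 + \frac{1}{\alpha_p(O_k)}$ after a short manipulation; the exact placement of the $+2$ versus the $\frac{1}{\alpha_p(O_k)}$ term tells me which inequalities need to be tight, and I would reverse-engineer the client-set choices from that target.
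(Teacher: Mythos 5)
Your first step is exactly the paper's: the displacement bound $c_k(O_p)\le \sum_{i=1}^k d(a_p^i,O_k)+k\cdot d(O_k,O_p)\le c_k(O_k)+k\cdot d(O_k,O_p)$ is correct and is how the paper opens its proof. The gap is in the second step, which you correctly identify as the crux but do not actually solve. The paper's key move is to average the triangle inequality over the \emph{top $p$ clients of $O_k$}: for any $p$ clients, $p\cdot d(O_k,O_p)\le \sum_{i=1}^p d(a_k^i,O_k)+\sum_{i=1}^p d(a_k^i,O_p)\le c_p(O_k)+c_p(O_p)$, hence $k\cdot d(O_k,O_p)\le \frac{k}{p}\left[c_p(O_k)+c_p(O_p)\right]$. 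Writing $c_p(O_p)=c_p(O_k)/\alpha_p(O_k)$ and then using the averaging inequality $c_p(O_k)\le \frac{p}{k}c_k(O_k)$ turns this into $k\cdot d(O_k,O_p)\le \left(1+\frac{1}{\alpha_p(O_k)}\right)c_k(O_k)$, which combined with the displacement bound gives the theorem. The essential idea you are missing is the division by $p$ (not $k$) when bounding $d(O_k,O_p)$, which is what makes the $\frac{k}{p}$ prefactor appear and cancel against the $\frac{p}{k}$ in $c_p(O_k)\le\frac{p}{k}c_k(O_k)$.

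Moreover, the specific target you propose, $k\cdot d(O_k,O_p)\le c_k(O_k)+\bigl(c_p(O_k)-c_p(O_p)\bigr)$, would not yield the theorem even if it were true: dividing through by $c_k(O_k)$ gives $\alpha_k(O_p)\le 2+\frac{c_p(O_k)}{c_k(O_k)}\left(1-\frac{1}{\alpha_p(O_k)}\right)$, in which the correction term has the wrong sign in $\frac{1}{\alpha_p(O_k)}$ and carries a factor $\frac{c_p(O_k)}{c_k(O_k)}$ that can be as large as $\frac{p}{k}$, so the bound degrades (rather than improves) as $\alpha_p(O_k)$ grows, which is the opposite of what the statement asserts. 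The correct intermediate bound is $k\cdot d(O_k,O_p)\le \frac{k}{p}\bigl(c_p(O_k)+c_p(O_p)\bigr)$, a sum rather than a difference. So while your overall architecture (displacement bound plus a bound on $k\cdot d(O_k,O_p)$, then divide by $c_k(O_k)$) matches the paper, the proposal as written does not contain a workable version of the decisive inequality.
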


The above theorem indicates that by picking either $O_k$ or $O_p$, the values of $\alpha_k(O_p)$ and $\alpha_p(O_k)$ cannot be simultaneously large. In other words, either setting $A$ to be $O_k$ would ensure that both $\alpha_k(A)$ and $\alpha_p(A)$ are small or setting $A$ to be $O_p$ would. Now, in order to prove Theorem \ref{theorem:2_gen}, we will first show some simple lemmas.\\

\begin{lemma}
For any $k$, $p$ and any $A \in \F$, we have that $\sum_{i=1}^{p} d(\aobj{k}{i}, A) \leq \cost{p}{A}$.

\begin{proof}	
Here note that $\cost{p}{A} = \sum_{i=1}^{p} d(a^i, A)$, where $a^1, a^2, \cdots, a^p$ are the top $p$ clients furthest away from $A$ by definition. This means that when given any $p$ distinct client locations $b_1, b_2,\cdots, b_p$, we would have $\cost{p}{A} \geq \sum_{i=1}^{p} d(b_i, A)$. Therefore, we can conclude that $\sum_{i=1}^{p} d(\aobj{k}{i}, A) \leq \cost{p}{A}$.
\end{proof}
\label{lemma:diff_sum}
\end{lemma}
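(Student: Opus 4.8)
The plan is to recognize that the inequality is an instance of the elementary fact that a ``top-$p$'' sum of distances dominates the sum over any fixed collection of $p$ distinct clients. First I would observe that $\aobj{k}{1}, \aobj{k}{2}, \ldots, \aobj{k}{p}$ form a set of $p$ distinct clients; although they happen to be indexed by their distance from $O_k$, this relationship to $O_k$ is irrelevant for the lemma, and all that matters is that they are $p$ distinct clients. Meanwhile, recalling the definition $\cost{p}{A} = \sum_{i=1}^{p} d(a^i, A)$, in which $a^1, \ldots, a^p$ are the $p$ clients \emph{farthest from $A$}, the quantity $\cost{p}{A}$ is precisely the largest value attainable by summing the distances from $A$ to any $p$ distinct clients.

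The key step is to justify the general claim that for every choice of $p$ distinct clients $b_1, \ldots, b_p$ we have $\sum_{i=1}^{p} d(b_i, A) \le \cost{p}{A}$. I would establish this with a short exchange argument: if $\{b_1, \ldots, b_p\}$ is not already the set of $p$ clients farthest from $A$, then some chosen $b_j$ lies outside the top $p$ while some top-$p$ client $a^r$ is not among the $b_i$; since $d(a^r, A) \ge d(b_j, A)$, swapping $b_j$ for $a^r$ cannot decrease the sum. Repeating such swaps turns the selection into $\{a^1, \ldots, a^p\}$ without ever decreasing the total, which gives the claim.

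Finally, I would apply this claim to the particular distinct clients $b_i = \aobj{k}{i}$ for $i = 1, \ldots, p$, obtaining $\sum_{i=1}^{p} d(\aobj{k}{i}, A) \le \cost{p}{A}$ at once. I expect no genuine obstacle here; the only point demanding care is the rigorous verification that the top-$p$ sum is maximal among all $p$-element selections, and the exchange argument above settles this cleanly. (One could alternatively phrase the same fact as: $\cost{p}{A}$ is the maximum of $\sum_{i=1}^p d(b_i,A)$ over all $p$-subsets of clients, so any particular subset, including $\{\aobj{k}{i}\}_{i=1}^p$, yields a value at most $\cost{p}{A}$.)
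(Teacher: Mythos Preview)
Your proposal is correct and follows essentially the same approach as the paper: both argue that $\cost{p}{A}$, being the sum of the $p$ largest client-to-$A$ distances, dominates the sum over any fixed set of $p$ distinct clients, and then specialize to $b_i = \aobj{k}{i}$. The only difference is that you spell out the exchange argument justifying this maximality, whereas the paper simply asserts it.
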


\begin{lemma}
For $1 \leq k < p \leq n, A \in \F$, we have that $\cost{p}{A} \leq \frac{p}{k} \cdot \cost{k}{A}$.

\begin{proof}
	By definition, $\cost{p}{A} = \sum_{i=1}^{p} d(a^i, A)$ is the summation of the largest $p$ distances from any clients to facility location $A$ and $\cost{k}{A} = \sum_{i=1}^{k} d(a^i, A)$ is the summation of the largest $k$ distances from any clients to facility location $A$. However, since $k < p$, we must have $\frac{1}{p} \cdot \sum_{i=1}^{p} d(a^i, A) \leq \frac{1}{k} \cdot \sum_{i=1}^{k} d(a^i, A)$. Hence, we can conclude that for $1 \leq k < p \leq n$, $\cost{p}{A} \leq \frac{p}{k} \cdot \cost{k}{A}$.
\end{proof}
\label{lemma:k_cost}
\end{lemma}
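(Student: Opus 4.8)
The plan is to prove the statement via a simple averaging argument: the average of the top $p$ distances cannot exceed the average of the top $k$ distances, since moving from $k$ to $p$ terms only appends additional, smaller distances. First I would write both quantities explicitly in terms of the ordering $a^1, a^2, \ldots, a^n$ with $d(a^1,A) \geq d(a^2,A) \geq \cdots \geq d(a^n,A)$, so that $\cost{k}{A} = \sum_{i=1}^{k} d(a^i,A)$ and $\cost{p}{A} = \sum_{i=1}^{p} d(a^i,A)$.

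Next I would isolate the extra terms and bound each one. Since $d(a^k,A)$ is the smallest among the top $k$ distances, it is at most their average, i.e. $d(a^k,A) \leq \tfrac{1}{k}\cost{k}{A}$. Because the sequence is nonincreasing and $k<p$, every extra term $d(a^i,A)$ with $k < i \leq p$ satisfies $d(a^i,A) \leq d(a^k,A) \leq \tfrac{1}{k}\cost{k}{A}$. Summing these $p-k$ bounds and adding back $\cost{k}{A}$ gives
\[
\cost{p}{A} = \cost{k}{A} + \sum_{i=k+1}^{p} d(a^i,A) \leq \cost{k}{A} + (p-k)\cdot\frac{1}{k}\cost{k}{A} = \frac{p}{k}\,\cost{k}{A},
\]
which is exactly the claimed inequality.

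There is essentially no hard part here; the only point to be careful about is recognizing that the $k$-th largest distance is bounded by the average of the top $k$ distances, which is what converts a pointwise comparison into the desired ratio. An equivalent and even shorter route is to observe directly that $\tfrac{1}{p}\cost{p}{A} \leq \tfrac{1}{k}\cost{k}{A}$, since the left-hand side averages the top $p$ distances while the right-hand side averages the strictly larger top $k$ distances, and then clear denominators; either phrasing yields the result.
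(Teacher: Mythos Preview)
Your proof is correct and takes essentially the same approach as the paper: both rest on the averaging observation that $\tfrac{1}{p}\cost{p}{A} \leq \tfrac{1}{k}\cost{k}{A}$, with your version spelling out the term-by-term bound via $d(a^i,A) \leq d(a^k,A) \leq \tfrac{1}{k}\cost{k}{A}$ for $i>k$ before noting the one-line alternative that the paper uses directly.
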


Now, with the above lemmas, we can then present the proof for Theorem \ref{theorem:2_gen} as follows:\\

\begin{proof}[Proof of Theorem \ref{theorem:2_gen}]
	First, note that all the clients and possible facility locations are located in metric space $d$, so we have
\[
\begin{aligned}
	c_k(O_p) &= \sum_{i=1}^{k} d(\aobj{p}{i}, O_p)\\
	&\leq \sum_{i=1}^{k} \left[d(\aobj{p}{i}, O_k) + d(O_k, O_p)\right]\\
	&= \sum_{i=1}^{k} d(\aobj{p}{i}, O_k) + k\cdot d(O_k, O_p)
\end{aligned}
\]
Then, by Lemma \ref{lemma:diff_sum} and the triangle inequality, we have 
\[
\begin{aligned}
	c_k(O_p) &\leq \costO{k}{k} + k\cdot d(O_k, O_p)\\
	&\leq \cost{k}{O_k} + \frac{k}{p} \left[ \sum_{i=1}^{p} d(\aobj{k}{i}, O_k) + \sum_{i=1}^{p} d(\aobj{k}{i}, O_p) \right]\\
	&\leq  \cost{k}{O_k} + \frac{k}{p} \left[ \cost{p}{O_k} + \cost{p}{O_p} \right]\\
	&= \cost{k}{O_k} + \frac{k}{p} \left[ \cost{p}{O_k} + \cost{p}{O_k} \cdot \frac{\cost{p}{O_p}}{\cost{p}{O_k}} \right]\\
	&= \cost{k}{O_k} + \frac{k}{p} \left[ \cost{p}{O_k} + \frac{\cost{p}{O_k}}{\alp{p}{O_k}} \right]\\
	&= \cost{k}{O_k} + \frac{k}{p} \left[ \left(1 + \frac{1}{\alp{p}{O_k}}\right)\cost{p}{O_k} \right]
\end{aligned}
\]
Since $k < p$, by Lemma \ref{lemma:k_cost} we can see that
\[
\begin{aligned}
	c_k(O_p) &\leq \costO{k}{k} + \frac{k}{p} \left[ \left(1 + \frac{1}{\alp{p}{O_k}}\right)\cdot \frac{p}{k} \costO{k}{k} \right]\\
	& = \costO{k}{k} + \left(1 + \frac{1}{\alp{p}{O_k}}\right)\costO{k}{k}\\
	& = \left(2 + \frac{1}{\alp{p}{O_k}}\right)\costO{k}{k}
\end{aligned}
\]
Now, divide both side by $\costO{k}{k}$, then
\[
\begin{aligned}
	\frac{c_k(O_p)}{\costO{k}{k}} &\leq  \left(2 + \frac{1}{\alpO{p}{k}}\right)\frac{\costO{k}{k}}{\costO{k}{k}}\\
	\alpO{k}{p} &\leq \frac{1}{\alpO{p}{k}} + 2,
\end{aligned}
\]
as desired. 
\end{proof}

Note that Theorem \ref{theorem:2_gen} immediately implies the following corollaries:\\

\begin{corollary}
For $1 \leq k < p \leq n$,
\begin{enumerate}
	\item By choosing the optimal facility location $O_p$ that minimizes $c_p$, we obtain a $(3, 1)$ approximation for simultaneously minimizing $c_k$ and $c_p$. 
	\item There always exists a facility location $A\in\F$ such that choosing $A$ would give a $1+\sqrt{2}$ approximation both for minimizing $c_k$ and minimizing $c_p$. In fact, we would either get a $(1, 1+\sqrt{2})$ approximation by choosing $O_k$ or a $(1+\sqrt{2}, 1)$ approximation by choosing $O_p$. In other words, at least one of $\alpha_k(O_p)$ or $\alpha_p(O_k)$ is always less than or equal to $1+\sqrt{2}$. 
\end{enumerate}

\begin{proof}
	We will first show  $\alpO{k}{p} \leq 3$. Note that $\alpO{p}{k} = \frac{\costO{p}{k}}{\costO{p}{p}}$ by definition where $O_p$ is the optimal facility location for minimizing $c_p$. This means that we must have $\costO{p}{k} \geq \costO{p}{p}$, so $\alpO{p}{k} = \frac{\costO{p}{k}}{\costO{p}{p}} \geq 1$. Then, by Theorem \ref{theorem:2_gen} we have that $\alpha_k(O_p) \leq \frac{1}{\alpha_p(O_k)} + 2 \leq 1 + 2 = 3$ as desired. 

Now we will show that $\max (\min (\alpha_k(O_p), \alpha_p(O_k))) \leq 1 + \sqrt{2}$. 
To do this, we will consider two cases. First, assume $\alpO{p}{k} > 1 + \sqrt{2}$, then we would have $\alpha_k(O_p) \leq \frac{1}{\alpha_p(O_k)} + 2 < \frac{1}{1 + \sqrt{2}} + 2 = 1 + \sqrt{2}$. Otherwise, we would have $\alpO{p}{k} \leq 1 + \sqrt{2}$. Therefore, we can conclude that $\max (\min (\alpha_k(O_p), \alpha_p(O_k))) \leq 1 + \sqrt{2}$. This result implies that one of $\alpha_k(O_p)$ and $\alpha_p(O_k)$ is less than or equal to $1+\sqrt{2}$, as desired.
\end{proof}
\label{coro:2_gen}
\end{corollary}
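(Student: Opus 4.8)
The plan is to read off both parts directly from Theorem~\ref{theorem:2_gen}, i.e.\ from the inequality $\alpha_k(O_p) \le \frac{1}{\alpha_p(O_k)} + 2$, together with the two trivial facts $\alpha_p(O_p) = 1$ and $\alpha_k(O_k) = 1$ (these hold since $O_p$ and $O_k$ are by definition optimal for $c_p$ and $c_k$ respectively, so each contributes a ``$1$'' to the approximation pair).

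For the first item, I would observe that optimality of $O_p$ for $c_p$ forces $c_p(O_k) \ge c_p(O_p)$, hence $\alpha_p(O_k) \ge 1$ and $\frac{1}{\alpha_p(O_k)} \le 1$. Substituting into Theorem~\ref{theorem:2_gen} yields $\alpha_k(O_p) \le 3$, and combined with $\alpha_p(O_p) = 1$ this is precisely the claimed $(3,1)$-approximation obtained by building at $O_p$.

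For the second item, the key point is that $1+\sqrt{2}$ is the fixed point of the (decreasing) map $x \mapsto \frac{1}{x} + 2$: one checks $\frac{1}{1+\sqrt{2}} + 2 = (\sqrt{2} - 1) + 2 = 1 + \sqrt{2}$. I would then case-split on the value of $\alpha_p(O_k)$. If $\alpha_p(O_k) > 1+\sqrt{2}$, then monotonicity together with Theorem~\ref{theorem:2_gen} gives $\alpha_k(O_p) < \frac{1}{1+\sqrt{2}}+2 = 1+\sqrt{2}$, so building at $O_p$ is a $(1+\sqrt{2},\,1)$-approximation. Otherwise $\alpha_p(O_k) \le 1+\sqrt{2}$, and since $\alpha_k(O_k)=1$, building at $O_k$ is a $(1,\,1+\sqrt{2})$-approximation. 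In either case $\min(\alpha_k(O_p),\alpha_p(O_k)) \le 1+\sqrt{2}$, which is the stated assertion that at least one of the two natural choices simultaneously approximates both objectives within $1+\sqrt{2}$.

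I do not expect any real obstacle here: the corollary is essentially a two-line deduction from Theorem~\ref{theorem:2_gen}. The only thing to be careful about is the arithmetic identity $\frac{1}{1+\sqrt{2}}+2 = 1+\sqrt{2}$, which is what pins down $1+\sqrt{2}$ as the correct constant rather than some other number; it also hints that this bound should be tight, the worst case being exactly the configuration in which the two sides of Theorem~\ref{theorem:2_gen} balance at this fixed point.
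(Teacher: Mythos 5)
Your proposal is correct and follows essentially the same route as the paper: part~1 uses $\alpha_p(O_k)\ge 1$ (from optimality of $O_p$ for $c_p$) inside Theorem~\ref{theorem:2_gen}, and part~2 is the same case split on whether $\alpha_p(O_k)$ exceeds the fixed point $1+\sqrt{2}$ of $x\mapsto \frac{1}{x}+2$. Your added remark that the fixed-point identity suggests tightness is a nice observation that the paper confirms later in Theorem~\ref{theorem:2_bound}.
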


The above results show that it is always possible to approximate any pair of our objectives to within a factor of $1+\sqrt{2}$ of optimum. However, it is natural to think that there exists some relationship between this approximation factor, and how similar the objectives are. Naturally, as the difference between $k$ and $p$ becomes smaller, we would expect that both $\alpO{p}{k}$ and $\alpO{k}{p}$ would also become smaller. In order to reveal such a potential relationship, we will first make some observations:
\\
\begin{lemma}
	For $1 \leq k < p \leq n$, we have that $\costO{p}{p} - \costO{k}{p} = \left[ \frac{1}{\alpO{p}{k}} - \alpO{k}{p}\right]\costO{k}{k} + \frac{1}{\alpO{p}{k}}\left[ \costO{p}{k} - \costO{k}{k} \right]$.

\begin{proof}
	First, recall that $\costO{p}{p} = \frac{\costO{p}{k}}{\alpO{p}{k}}, \costO{k}{p} = \alpO{k}{p}\cdot \costO{k}{k}$, then we have that 
\[
\begin{aligned}
	\costO{p}{p} - \costO{k}{p} &= \frac{\costO{p}{k}}{\alpO{p}{k}} - \alpO{k}{p}\cdot \costO{k}{k}\\
	&= \frac{\costO{k}{k}}{\alpO{p}{k}} + \frac{\costO{p}{k} - \costO{k}{k}}{\alpO{p}{k}} - \alpO{k}{p}\cdot \costO{k}{k}\\
	&= \left[ \frac{1}{\alpO{p}{k}} - \alpO{k}{p}\right]\costO{k}{k} + \frac{1}{\alpO{p}{k}}\left[ \costO{p}{k} - \costO{k}{k} \right]
\end{aligned}
\]
\end{proof}
\label{lemma:kpsum}
\end{lemma}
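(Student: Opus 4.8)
The statement is a pure algebraic identity relating the various costs at $O_k$ and $O_p$, so the plan is simply to unfold the two approximation ratios that appear on the right-hand side and rearrange. First I would use the definitions $\alpO{p}{k} = \costO{p}{k}/\costO{p}{p}$ and $\alpO{k}{p} = \costO{k}{p}/\costO{k}{k}$ — both well-defined since the standing assumption $\cost{1}{A}>0$ makes every cost strictly positive — to solve for the two quantities on the left: $\costO{p}{p} = \costO{p}{k}/\alpO{p}{k}$ and $\costO{k}{p} = \alpO{k}{p}\cdot\costO{k}{k}$. Substituting these into $\costO{p}{p} - \costO{k}{p}$ reduces the left-hand side to $\frac{\costO{p}{k}}{\alpO{p}{k}} - \alpO{k}{p}\cdot\costO{k}{k}$, which is already what the lemma's proof begins with.

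The only trick is to split the numerator $\costO{p}{k}$ as $\costO{k}{k} + \big(\costO{p}{k} - \costO{k}{k}\big)$, so that $\frac{\costO{p}{k}}{\alpO{p}{k}} = \frac{\costO{k}{k}}{\alpO{p}{k}} + \frac{\costO{p}{k} - \costO{k}{k}}{\alpO{p}{k}}$; this is the step that manufactures the $\costO{p}{k} - \costO{k}{k}$ term demanded by the claimed identity. Then I would collect the two terms proportional to $\costO{k}{k}$, namely $\frac{\costO{k}{k}}{\alpO{p}{k}}$ and $-\alpO{k}{p}\costO{k}{k}$, to obtain the coefficient $\frac{1}{\alpO{p}{k}} - \alpO{k}{p}$; the leftover term is exactly $\frac{1}{\alpO{p}{k}}\big(\costO{p}{k} - \costO{k}{k}\big)$, so the two sides agree.

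There is essentially no obstacle here: the lemma is bookkeeping that repackages $\costO{p}{p} - \costO{k}{p}$ into a form suited to the subsequent tight analysis — one where the later argument can exploit that $\costO{p}{k} - \costO{k}{k} = \sum_{i=k+1}^{p} d(\aobj{k}{i}, O_k) \geq 0$ and, via Lemma \ref{lemma:k_cost}, that $\costO{k}{k} \leq \costO{p}{k} \leq \frac{p}{k}\costO{k}{k}$. The one thing worth stating explicitly before dividing is that no denominator vanishes, which again follows from positivity of all costs.
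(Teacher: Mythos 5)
Your proposal is correct and follows exactly the same route as the paper's proof: unfold the two ratios via $\costO{p}{p} = \costO{p}{k}/\alpO{p}{k}$ and $\costO{k}{p} = \alpO{k}{p}\cdot\costO{k}{k}$, split $\costO{p}{k}$ as $\costO{k}{k} + (\costO{p}{k} - \costO{k}{k})$, and collect the terms in $\costO{k}{k}$. The only addition is your explicit remark that the denominators are nonzero, which the paper leaves implicit via its standing assumption $\cost{1}{A} > 0$.
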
 

\begin{lemma}
	For $1 \leq k < p \leq n, A \in \F$, we have that $\frac{k}{p-k}\left[ \cost{p}{A} - \cost{k}{A} \right] \leq \cost{k}{A} $.

\begin{proof}
	Since $k < p$, we have that $p - k > 0$. Similar to the proof for Lemma \ref{lemma:k_cost}, note that 
	$$\cost{p}{A} - \cost{k}{A} = \sum_{i = k+1}^{p} d(\aobj{}{i}, A) \leq (p-k) d(\aobj{}{k+1}, A) \leq (p-k) d(\aobj{}{k}, A) $$
	but we also have 
	$$ \cost{k}{A} \geq k\cdot d(\aobj{}{k}, A) $$
	Then, combine the above two inequalities, we have 
	$$\cost{p}{A} - \cost{k}{A} \leq \frac{p-k}{k} \cost{k}{A} $$
	Recall that $p - k > 0, k > 0$. Thus, we can see that
	$$\frac{k}{p-k}\left[ \cost{p}{A} - \cost{k}{A} \right] \leq \cost{k}{A}.$$
\end{proof} 
\label{lemma:kpReduce}
\end{lemma}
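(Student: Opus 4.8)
The plan is to prove this by a short averaging argument: I will bound both the ``tail'' $\cost{p}{A} - \cost{k}{A}$ from above and the ``head'' $\cost{k}{A}$ from below in terms of a single pivot quantity, the $k$-th largest client-facility distance $d(a^k, A)$, using the sorting $d(a^1,A) \geq d(a^2,A) \geq \cdots \geq d(a^n,A)$ fixed in Section~\ref{section:prelim}.

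First I would note, straight from the definitions of $\cost{p}{A}$ and $\cost{k}{A}$ as the sums of the top $p$ and the top $k$ distances, that $\cost{p}{A} - \cost{k}{A} = \sum_{i=k+1}^{p} d(a^i, A)$. Since the distances are in non-increasing order, each of these $p-k$ terms is at most $d(a^{k+1},A) \leq d(a^k,A)$, so $\cost{p}{A} - \cost{k}{A} \leq (p-k)\,d(a^k,A)$. In the other direction, each of the $k$ terms of $\cost{k}{A} = \sum_{i=1}^{k} d(a^i,A)$ is at least $d(a^k,A)$, hence $\cost{k}{A} \geq k\,d(a^k,A)$, i.e.\ $d(a^k,A) \leq \frac{1}{k}\cost{k}{A}$. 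Substituting this into the first bound gives $\cost{p}{A} - \cost{k}{A} \leq \frac{p-k}{k}\cost{k}{A}$, and multiplying through by $\frac{k}{p-k}$ --- which is positive and finite precisely because $1 \leq k < p$ forces $k \geq 1$ and $p-k \geq 1$ --- yields the stated inequality.

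I do not expect any real obstacle: the only points needing care are that the term-by-term comparison against the pivot $d(a^k,A)$ is exactly what the sorted ordering of clients provides, and that the hypothesis $k < p$ is what makes the coefficient $\frac{k}{p-k}$ well defined and positive. As a consistency check, the inequality is actually equivalent to Lemma~\ref{lemma:k_cost}: adding $\cost{k}{A}$ to both sides of $\cost{p}{A} - \cost{k}{A} \leq \frac{p-k}{k}\cost{k}{A}$ rearranges to $\cost{p}{A} \leq \frac{p}{k}\cost{k}{A}$, so one could alternatively obtain this lemma from Lemma~\ref{lemma:k_cost} by pure algebra, but the self-contained averaging argument above is equally short.
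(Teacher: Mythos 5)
Your proposal is correct and follows essentially the same averaging argument as the paper: bound the tail $\sum_{i=k+1}^{p} d(a^i,A)$ above by $(p-k)\,d(a^k,A)$ and the head $\cost{k}{A}$ below by $k\,d(a^k,A)$, then combine. Your closing remark that the lemma is algebraically equivalent to Lemma~\ref{lemma:k_cost} is a nice additional observation, but the core proof matches the paper's.
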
 

Now, equipped with the above lemmas, we can form tighter bounds than what we have shown in Theorem \ref{theorem:2_gen}. We begin by looking at the case where $p$ is at least twice as large as $k$. The result follows.\\

\begin{theorem}
	For $1 \leq k < p \leq n$, $\frac{k}{p} \leq \frac{1}{2}$, given the optimal facility location $O_k$ that minimizes $c_k$ and the optimal facility location $O_p$ that minimizes $c_p$, we have that $\alpha_k(O_p) \leq \frac{1}{\alpha_p(O_k)} + 2 - 2 \cdot \frac{k}{p}$.

\begin{proof}
	First, recall that all the clients and facility locations are in metric space $d$, so by triangle inequality we have 
	$$d(O_k, O_p) \leq \frac{1}{p-k}\left[ \sum_{i = k+1}^{p} d(\aobj{p}{i}, O_p) + \sum_{i = k+1}^{p} d(\aobj{p}{i}, O_k) \right]$$ 
	Then, similar to our proof for Theorem \ref{theorem:2_gen}, we have
	\[
	\begin{aligned}
	c_k(O_p) &= \sum_{i=1}^{k} d(\aobj{p}{i}, O_p)\\
	&\leq \sum_{i=1}^{k} \left[d(\aobj{p}{i}, O_k) + d(O_k, O_p)\right]\\
	&= \sum_{i=1}^{k} d(\aobj{p}{i}, O_k) + k\cdot d(O_k, O_p)\\
	&\leq \sum_{i=1}^{k} d(\aobj{p}{i}, O_k) + \frac{k}{p-k}\left[ \sum_{i = k+1}^{p} d(\aobj{p}{i}, O_p) + \sum_{i = k+1}^{p} d(\aobj{p}{i}, O_k) \right]\\
	&= \frac{k}{p-k} \sum_{i = 1}^{p} d(\aobj{p}{i}, O_k) + \left(1 - \frac{k}{p-k}\right) \sum_{i = 1}^{k} d(\aobj{p}{i}, O_k) + \frac{k}{p-k} \sum_{i = k+1}^{p} d(\aobj{p}{i}, O_p)\\
	&= \frac{k}{p-k} \sum_{i = 1}^{p} d(\aobj{p}{i}, O_k) + \left(1 - \frac{k}{p-k}\right) \sum_{i = 1}^{k} d(\aobj{p}{i}, O_k) + \frac{k}{p-k} \left[ \costO{p}{p} - \costO{k}{p}\right]\\
	\end{aligned}
	\]
	Note that for the term $\left(1 - \frac{k}{p-k}\right) \sum_{i = 1}^{k} d(\aobj{p}{i}, O_k)$, since $\frac{k}{p} \leq \frac{1}{2}$, we have $\frac{k}{p-k} \leq 1$, so $1 - \frac{k}{p-k} \geq 0$. Then, we can apply Lemma \ref{lemma:diff_sum} and get $\left(1 - \frac{k}{p-k}\right) \sum_{i = 1}^{k} d(\aobj{p}{i}, O_k) \leq \left(1 - \frac{k}{p-k}\right) \costO{k}{k}$. Then, by Lemma \ref{lemma:diff_sum} and Lemma \ref{lemma:kpsum}, we have
	\[
	\begin{aligned}
	c_k(O_p) &\leq \frac{k}{p-k} \costO{p}{k}  + \left(1 - \frac{k}{p-k}\right) \costO{k}{k} \\
	&\quad + \frac{k}{p-k} \left[ \frac{1}{\alpO{p}{k}} - \alpO{k}{p}\right]\costO{k}{k} + \frac{1}{\alpO{p}{k}} \cdot \frac{k}{p-k}\left[ \costO{p}{k} - \costO{k}{k} \right] \\
	&= \frac{k}{p-k} \left[ \costO{p}{k} - \costO{k}{k} \right] + \costO{k}{k} \\
	&\quad + \frac{k}{p-k} \left[ \frac{1}{\alpO{p}{k}} - \alpO{k}{p}\right]\costO{k}{k} + \frac{1}{\alpO{p}{k}} \cdot \frac{k}{p-k}\left[ \costO{p}{k} - \costO{k}{k} \right] \\
	\end{aligned}
	\]
	Now, by Lemma \ref{lemma:kpReduce}, we can see that
	\[
	\begin{aligned}
	c_k(O_p) &\leq \costO{k}{k} + \costO{k}{k} + \frac{k}{p-k} \left[ \frac{1}{\alpO{p}{k}} - \alpO{k}{p}\right]\costO{k}{k} + \frac{1}{\alpO{p}{k}} \costO{k}{k} \\
	\end{aligned}
	\]
	Divide both side by $\costO{k}{k}$, then
	\[
	\begin{aligned}
	\frac{c_k(O_p)}{\costO{k}{k}} &\leq 1 + 1 + \frac{k}{p-k} \left[ \frac{1}{\alpO{p}{k}} - \alpO{k}{p}\right] + \frac{1}{\alpO{p}{k}} \\
	\left( 1 + \frac{k}{p-k}\right) \alpO{k}{p} &\leq \left( 1 + \frac{k}{p-k}\right)\frac{1}{\alpO{p}{k}} + 2\\
	\alpO{k}{p} &\leq \frac{1}{\alpO{p}{k}} + 2 \cdot \frac{p-k}{p}
	\end{aligned}
	\]
	Hence, we can conclude that 
	$$ \alpha_k(O_p) \leq \frac{1}{\alpha_p(O_k)} + 2 - 2 \cdot \frac{k}{p} $$
\end{proof}
\label{theorem:2_genk}
\end{theorem}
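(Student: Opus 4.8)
The plan is to follow the same skeleton as the proof of Theorem~\ref{theorem:2_gen} --- bound $\costO{k}{p}$ by moving from $O_p$ to $O_k$ via the triangle inequality --- but to replace the crude estimate of $d(O_k, O_p)$ with a sharper one that only uses the ``tail'' clients ranked $k+1, \dots, p$ with respect to $O_p$. Concretely, for each index $i$ with $k+1 \leq i \leq p$ the triangle inequality gives $d(O_k, O_p) \leq d(\aobj{p}{i}, O_p) + d(\aobj{p}{i}, O_k)$; averaging these $p-k$ inequalities yields
\[
d(O_k, O_p) \leq \frac{1}{p-k}\left[\sum_{i=k+1}^{p} d(\aobj{p}{i}, O_p) + \sum_{i=k+1}^{p} d(\aobj{p}{i}, O_k)\right].
\]
Starting from $\costO{k}{p} = \sum_{i=1}^{k} d(\aobj{p}{i}, O_p) \leq \sum_{i=1}^{k} d(\aobj{p}{i}, O_k) + k\cdot d(O_k, O_p)$ and substituting this bound, I would then regroup the $O_k$-terms so that the full sum $\sum_{i=1}^{p} d(\aobj{p}{i}, O_k)$ appears with coefficient $\frac{k}{p-k}$ and the partial sum $\sum_{i=1}^{k} d(\aobj{p}{i}, O_k)$ with coefficient $1 - \frac{k}{p-k}$ (using $\sum_{i=k+1}^{p} = \sum_{i=1}^{p} - \sum_{i=1}^{k}$).

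The hypothesis $\frac{k}{p} \leq \frac{1}{2}$ enters exactly here: it is equivalent to $\frac{k}{p-k} \leq 1$, so the coefficient $1 - \frac{k}{p-k}$ is nonnegative and Lemma~\ref{lemma:diff_sum} may be applied to the partial sum, $\sum_{i=1}^{k} d(\aobj{p}{i}, O_k) \leq \costO{k}{k}$, without reversing an inequality; Lemma~\ref{lemma:diff_sum} also bounds the full sum by $\costO{p}{k}$. The remaining tail term $\sum_{i=k+1}^{p} d(\aobj{p}{i}, O_p)$ equals $\costO{p}{p} - \costO{k}{p}$ (since the $\aobj{p}{i}$ are ordered by distance to $O_p$), which I would rewrite via Lemma~\ref{lemma:kpsum} to reintroduce $\alpO{k}{p}$ and $\frac{1}{\alpO{p}{k}}$ on the right-hand side --- this is what makes the estimate self-referential in the useful way, since $\alpO{k}{p}$ then appears on both sides and can be solved for.

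After these substitutions the inequality has the shape $\costO{k}{p} \leq \costO{k}{k} + \frac{k}{p-k}\bigl[\costO{p}{k} - \costO{k}{k}\bigr] + (\text{terms in } \alpO{p}{k},\ \alpO{k}{p})\cdot\costO{k}{k}$, and I would invoke Lemma~\ref{lemma:kpReduce} to absorb $\frac{k}{p-k}\bigl[\costO{p}{k} - \costO{k}{k}\bigr] \leq \costO{k}{k}$. Dividing through by $\costO{k}{k}$ and collecting terms, the common factor $\left(1 + \frac{k}{p-k}\right) = \frac{p}{p-k}$ multiplies both $\alpO{k}{p}$ and $\frac{1}{\alpO{p}{k}}$, so it cancels, leaving $\alpO{k}{p} \leq \frac{1}{\alpO{p}{k}} + \frac{2(p-k)}{p} = \frac{1}{\alpO{p}{k}} + 2 - 2\cdot\frac{k}{p}$. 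I expect the only real obstacle to be the bookkeeping: tracking the three sums ($\sum_{i=1}^{p}$ and $\sum_{i=1}^{k}$ against $O_k$, plus the $O_p$-tail sum) through the regrouping with the correct weights, and checking that each application of Lemmas~\ref{lemma:diff_sum}, \ref{lemma:kpsum}, and \ref{lemma:kpReduce} points in the direction that preserves the chain of inequalities --- the sign check on $1 - \frac{k}{p-k}$ being the one genuinely load-bearing step rather than a routine one.
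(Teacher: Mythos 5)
Your proposal is correct and follows essentially the same route as the paper's proof: the averaged triangle inequality over the tail clients $k+1,\dots,p$, the regrouping into the full sum with weight $\frac{k}{p-k}$ and the partial sum with weight $1-\frac{k}{p-k}$ (whose nonnegativity is exactly where $\frac{k}{p}\leq\frac{1}{2}$ is used), followed by Lemmas~\ref{lemma:diff_sum}, \ref{lemma:kpsum}, and \ref{lemma:kpReduce} and cancellation of the common factor $\frac{p}{p-k}$. No gaps.
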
 

Note that this result is in a form similar to what we have shown in Theorem \ref{theorem:2_gen} but with an offset of $- 2 \cdot \frac{k}{p}$. What this means is that if we know the value of $\alpO{p}{k}$, then the value of $\alpO{k}{p}$ would be further restricted by this offset comparing to the result in Theorem \ref{theorem:2_gen}. In addition, we can also see that the bigger $\frac{k}{p}$ becomes, the smaller the right-hand side value of the inequality becomes. In other words, assume that $\frac{k}{p} \leq \frac{1}{2}$, as the difference between $k$ and $p$ becomes smaller, the upper bound of $\alpO{k}{p}$ would also becomes smaller given a fixed value of $\alpO{p}{k}$. 

However, we still haven't found the relationship among $\frac{k}{p}, \alpO{k}{p}$ and $\alpO{p}{k}$ when $\frac{1}{2} < \frac{k}{p} \leq 1$. In order to show the underlying relationship between these values, we will utilize a different (much simpler) method from what we have been using for Theorem \ref{theorem:2_gen} and Theorem \ref{theorem:2_genk}, which yields the following results.

\begin{theorem}
For $1 \leq k < p \leq n$, given the optimal facility location $O_k$ that minimizes $c_k$ and the optimal facility location $O_p$ that minimizes $c_p$, we have that $\alpha_k(O_p) \leq \frac{p}{k}\cdot\frac{1}{\alpha_p(O_k)}$ and $\alpha_p(O_k) \leq \frac{p}{k}\cdot\frac{1}{\alpha_k(O_p)}$.

\begin{proof}	
	Recall that we have
	$$\alpO{p}{k} = \frac{\costO{p}{k}}{\costO{p}{p}} = \frac{\costO{k}{k} + \left[ \costO{p}{k} - \costO{k}{k} \right]}{\costO{k}{p} + \left[ \costO{p}{p} - \costO{k}{p} \right]}$$
	We will first consider the numerator $\costO{p}{k} = \costO{k}{k} + \left[ \costO{p}{k} - \costO{k}{k} \right]$. To simplify notation, define 
	$\psi = \frac{1}{k}\cdot\costO{k}{k},$ so $\costO{k}{k} = k \cdot \psi$.
	Note that here by Lemma \ref{lemma:kpReduce} we must have $\frac{k}{p-k}\cdot \left[ \costO{p}{k} - \costO{k}{k} \right] \leq \costO{k}{k} = k \cdot \psi$, so $\costO{p}{k} \leq k \cdot \psi + (p-k) \cdot \psi = p \cdot \psi$. Then, we will look at the denominator $\costO{k}{p} + \left[ \costO{p}{p} - \costO{k}{p} \right]$. Recall that $\costO{k}{p}=\alpO{k}{p} \cdot \costO{k}{k}$, so we have $\costO{k}{p} = k \cdot \psi \cdot \alpO{k}{p}$. Now, note that by definition we have $\costO{p}{p} - \costO{k}{p} \geq 0$, so we can see that $\costO{p}{p} \geq k \cdot \psi \cdot \alpO{k}{p} + 0 = k \cdot \psi \cdot \alpO{k}{p}$. Therefore, we have
	$$\alpO{p}{k} = \frac{\costO{p}{k}}{\costO{p}{p}} = \frac{\costO{k}{k} + \left[ \costO{p}{k} - \costO{k}{k} \right]}{\costO{k}{p} + \left[ \costO{p}{p} - \costO{k}{p} \right]} \leq \frac{p \cdot \psi}{k \cdot \psi} \cdot \frac{1}{\alpO{k}{p}} = \frac{p}{k} \cdot \frac{1}{\alpO{k}{p}}$$
	Here note that we have $\alpO{k}{p} \geq 1, \alpO{p}{k} \geq 1$ by definition, we can also conclude that
	$$\alpO{k}{p} \leq \frac{p}{k} \cdot \frac{1}{\alpO{p}{k}}$$
	as desired.
	
\end{proof}
\label{theorem:2_genkgen}
\end{theorem}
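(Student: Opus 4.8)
The plan is to reduce the two stated inequalities to a single multiplicative claim, namely that
\[
\alpO{k}{p}\cdot\alpO{p}{k}\;\leq\;\frac{p}{k}.
\]
Since $\alpO{k}{p}\geq 1$ and $\alpO{p}{k}\geq 1$ by definition (and all the costs involved are strictly positive by the standing assumption that $\cost{1}{A}>0$), dividing this bound by $\alpO{p}{k}$ yields $\alpO{k}{p}\leq\frac{p}{k}\cdot\frac{1}{\alpO{p}{k}}$, and dividing it by $\alpO{k}{p}$ yields the symmetric statement. So everything reduces to bounding the product of the two approximation ratios.

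For that I would use only two elementary facts. First, for any $A\in\F$ we have $\cost{k}{A}\leq\cost{p}{A}$: since $k<p$, the $k$-centrum cost is a partial sum of the $p$-centrum cost, and all client--facility distances are nonnegative. Second, Lemma~\ref{lemma:k_cost} gives $\cost{p}{A}\leq\frac{p}{k}\cdot\cost{k}{A}$ for every $A$. Instantiating the first fact at $A=O_p$ gives $\costO{k}{p}\leq\costO{p}{p}$, and instantiating the second at $A=O_k$ gives $\costO{p}{k}\leq\frac{p}{k}\cdot\costO{k}{k}$.

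Then I would simply expand the product of the two ratios and substitute these two inequalities:
\[
\alpO{k}{p}\cdot\alpO{p}{k}
=\frac{\costO{k}{p}}{\costO{k}{k}}\cdot\frac{\costO{p}{k}}{\costO{p}{p}}
\leq\frac{\costO{p}{p}}{\costO{k}{k}}\cdot\frac{(p/k)\,\costO{k}{k}}{\costO{p}{p}}
=\frac{p}{k},
\]
which is precisely the multiplicative claim; the theorem then follows as explained in the first paragraph.

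I do not expect any genuine obstacle here: the only points requiring care are that the denominators $\costO{k}{k}$ and $\costO{p}{p}$ are positive (guaranteed by the standing assumption) and that the two substituted inequalities are oriented so that the product comes out as an upper bound rather than a lower bound. An alternative, slightly longer route mirrors the additive bookkeeping used for Theorem~\ref{theorem:2_gen}: write $\costO{p}{k}=\costO{k}{k}+\bigl[\costO{p}{k}-\costO{k}{k}\bigr]$, bound the bracketed term via Lemma~\ref{lemma:kpReduce}, and bound $\costO{p}{p}$ from below by $\costO{k}{p}=\alpO{k}{p}\cdot\costO{k}{k}$; but the multiplicative argument above is the most economical.
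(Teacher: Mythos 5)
Your proposal is correct and is essentially the paper's own argument in multiplicative packaging: the two substituted inequalities, $\costO{k}{p}\leq\costO{p}{p}$ and $\costO{p}{k}\leq\frac{p}{k}\costO{k}{k}$, are exactly the bounds the paper places on the denominator and numerator of $\alpO{p}{k}$ (the paper derives the latter via Lemma~\ref{lemma:kpReduce}, which is equivalent to invoking Lemma~\ref{lemma:k_cost} directly as you do). The reduction to the single product bound $\alpO{k}{p}\cdot\alpO{p}{k}\leq\frac{p}{k}$ is a clean and valid reorganization, and, like the paper's proof, uses no metric properties.
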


Interestingly, since the proof for Theorem \ref{theorem:2_genkgen} does not use any properties of metric space, it is true even under non-metric spaces. In addition, note that as the difference between $p$ and $k$ becomes smaller, the value of $\frac{p}{k}$ becomes smaller. This means that the upper bound for $\alpO{k}{p}$ would also become smaller if the value of $\alpO{p}{k}$ is given. And vice versa, the upper bound for $\alpO{p}{k}$ would become smaller if the value of $\alpO{k}{p}$ is given. Now that we have obtained bounds over all $\frac{k}{p} \in (0,1]$, which is equivalent to $\frac{p}{k} \in [1,\infty)$ from Theorem \ref{theorem:2_genk} and Theorem \ref{theorem:2_genkgen}, we can conclude the following results:\\

\begin{theorem}
For $1 \leq k < p \leq n$, let $x = \frac{p}{k}$. Define $f:[1,\infty)\rightarrow\mathbb{R}$ as
\[f(x) = \begin{cases} 
                \sqrt{{x}} & 1 \leq x \leq 4 \\
                1 - x^{-1} + \sqrt{x^{-2}-{2}{x}^{-1}+2} & x > 4
            \end{cases}
\]
For some fixed $x$, let $\beta=f(x)$. Then there exists a facility location $A$ in $\F$ such that choosing $A$ would give a $\beta$ approximation simultaneously for both minimizing $c_k$ and minimizing $c_p$. In fact, we would either get a $(1, \beta)$ approximation by choosing $O_k$ or a $(\beta, 1)$ approximation by choosing $O_p$. In other words, at least one of $\alpha_k(O_p)$ and $\alpha_p(O_k)$ is less than or equal to $\beta$. Moreover, this result is tight: for each $x$ we give an instance such that all locations in $\F$ are no better than a $\beta$ approximation for at least one of the objectives. 
\label{theorem:2_bound}
\end{theorem}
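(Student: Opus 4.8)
The plan is to prove the claimed upper bound by directly combining Theorems~\ref{theorem:2_genk} and~\ref{theorem:2_genkgen}, and then to establish tightness by exhibiting, for each admissible ratio, an explicit small instance. Throughout write $a=\alpO{p}{k}\ge 1$, $b=\alpO{k}{p}\ge 1$, and recall $x=p/k$. Since $O_k$ is optimal for $c_k$ and $O_p$ for $c_p$, choosing $O_k$ gives the approximation vector $(1,a)$ and choosing $O_p$ gives $(b,1)$; as $f(x)\ge 1$ always, it therefore suffices to prove $\min(a,b)\le f(x)$, which yields both the ``$(1,\beta)$ or $(\beta,1)$'' statement and the ``at least one of $\alpha_k(O_p),\alpha_p(O_k)$'' statement. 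I would split on the two pieces of $f$.

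For $1\le x\le 4$: Theorem~\ref{theorem:2_genkgen} gives $ab\le x$, hence $\min(a,b)^2\le ab\le x$, so $\min(a,b)\le\sqrt{x}=f(x)$. For $x>4$ (so in particular $x\ge 2$): Theorem~\ref{theorem:2_genk} applies and gives $b\le \tfrac1a+2-\tfrac2x$. The key algebraic observation is that $\beta:=f(x)=1-x^{-1}+\sqrt{x^{-2}-2x^{-1}+2}$ is exactly the positive root of $t^2-(2-\tfrac2x)t-1=0$ (quadratic formula), which is $>1$, and equivalently satisfies $\tfrac1\beta+2-\tfrac2x=\beta$. If $\min(a,b)>\beta$ then $a>\beta$, so $\tfrac1a<\tfrac1\beta$, whence $b\le\tfrac1a+2-\tfrac2x<\tfrac1\beta+2-\tfrac2x=\beta$, contradicting $b>\beta$. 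Hence $\min(a,b)\le\beta$ here too. (One checks $f$ is continuous at $x=4$: both pieces equal $2$.)

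For tightness I would, for each ratio $x=p/k$, build an instance with $\F=\{O_k,O_p\}$, $n=p$ clients, and $\alpO{k}{p}=\alpO{p}{k}=\beta:=f(x)$; then $O_k$ is no better than a $\beta$-approximation for $c_p$ and $O_p$ is no better than a $\beta$-approximation for $c_k$, so no location in $\F$ beats $\beta$ on both. For $1\le x\le 4$: put $p-k$ clients at $O_p$ and $k$ clients at a single point $z$, with $d(O_k,O_p)=1$, $d(O_k,z)=1$, $d(O_p,z)=\sqrt{x}$; the triangle inequality holds precisely because $\sqrt{x}\le 2$, i.e. $x\le 4$. Every client is then at distance $1$ from $O_k$, so $\costO{k}{k}=k$ and $\costO{p}{k}=p$, while from $O_p$ the clients at $z$ are at distance $\sqrt{x}$ and the rest at distance $0$, giving $c_k(O_p)=c_p(O_p)=k\sqrt{x}$; using $x\ge 1$, $O_k$ and $O_p$ are indeed the respective optima and $\alpO{k}{p}=\alpO{p}{k}=\sqrt{x}$. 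For $x>4$ (where $\beta>2$): place all points on a line, with $O_p$ at $0$, the $p-k$ ``inner'' clients at $\beta-2>0$, $O_k$ at $\beta-1$, and the $k$ ``outer'' clients at $\beta$; each client is again at distance $1$ from $O_k$, so $\costO{k}{k}=k$ and $\costO{p}{k}=p$, while from $O_p$ we get $c_k(O_p)=k\beta$ and $c_p(O_p)=k\beta+(p-k)(\beta-2)$. Using $\beta^2=(2-\tfrac2x)\beta+1$ this last quantity simplifies to $p/\beta$, so $O_k,O_p$ are the respective optima and $\alpO{k}{p}=\alpO{p}{k}=\beta$. (At $x=4$ the two families agree: $\beta-2=0$, so the inner clients sit at $O_p$.)

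The main obstacle is the tightness half: carefully checking that the stated point configurations are valid metric spaces (the $x\le 4$ one genuinely needs $\sqrt x\le 2$, the $x>4$ one needs $\beta-2\ge 0$), that $O_k$ and $O_p$ really are the optima of their objectives over $\F$ (immediate since $|\F|=2$, but worth stating), and that the arithmetic with $\beta$ collapses as claimed via the defining quadratic. The upper-bound half is just the short two-case argument above built on Theorems~\ref{theorem:2_genk} and~\ref{theorem:2_genkgen}.
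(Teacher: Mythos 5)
Your proposal is correct and follows essentially the same route as the paper: the upper bound comes from Theorem~\ref{theorem:2_genkgen} for $x\le 4$ and from Theorem~\ref{theorem:2_genk} plus the defining quadratic of $\beta$ for $x>4$, and your two tight instances are exactly the paper's (the triangle with $d(O_p,A)=\sqrt{x}$, and the line configuration with gap $\beta-2$, just reflected). The only cosmetic difference is that you bound $\min(a,b)^2\le ab\le x$ directly instead of the paper's case split, and you should still spell out that $\beta>2$ for $x>4$ (e.g., since the quadratic $t^2-(2-\tfrac2x)t-1$ is negative at $t=2$), which the paper verifies via monotonicity of $g(y)$.
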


The above theorem means that $f(\frac{p}{k})$ is a tight upper bound for the approximation ratio we can obtain for two simultaneous objectives in a general metric space.

\begin{proof}

We will first consider the case where $x = \frac{p}{k} > 4$. Then, let $\beta = 1 - x^{-1} + \sqrt{x^{-2}-{2}{x}^{-1}+2}$; we will show that $\max (\min (\alpha_k(O_p), \alpha_p(O_k))) \leq \beta$.  
To do this, similar to the proof for Corollary \ref{coro:2_gen}, we will consider two cases. First, assume $\alpO{p}{k} > \beta$, then by Theorem \ref{theorem:2_genk} we would have 
\[
\begin{aligned}
	\alpha_k(O_p) &\leq \frac{1}{\alpha_p(O_k)} + 2 - 2 \cdot \frac{k}{p}\\
	&< \frac{1}{1 - x^{-1} + \sqrt{x^{-2} - 2x^{-1} +2}} + 2 - 2x^{-1}\\
	&= \frac{1 - x^{-1} - \sqrt{x^{-2} - 2x^{-1} +2}}{\left(1 - x^{-1}\right)^2 - \left[{x^{-2} - 2x^{-1} +2}\right]} + 2 - 2x^{-1}\\
	&= \frac{1 - x^{-1} - \sqrt{x^{-2} - 2x^{-1} +2}}{-1} + 2 - 2x^{-1}\\
	&= -1 + x^{-1} + \sqrt{x^{-2} - 2x^{-1} +2} + 2 - 2x^{-1}\\
	&= 1 - x^{-1} + \sqrt{x^{-2} - 2x^{-1} +2} \\
	&= \beta
	\end{aligned}
\]
Otherwise, we would have $\alpO{p}{k} \leq \beta$. Hence, we can see that $\max (\min (\alpha_k(O_p), \alpha_p(O_k))) \leq \beta$. This result implies that one of $\alpha_k(O_p)$ and $\alpha_p(O_k)$ is less than or equal to $\beta$ as desired. Now, we proceed to show that this bound is tight by constructing an example such that there are only two possible facility locations in $\F$, corresponding to $O_k$ and $O_p$, and $\alpha_p(O_k) = \alpha_k(O_p) = \beta$. Suppose all the clients and facility locations are located on a line (see Figure \ref{fig:lo_bound}). Suppose that there are $k$ clients located at the left-most location, denoted by $A$, with $d(A, O_k) = 1$, $d(A, O_p) = 2 + \delta$, $\delta \geq 0$ such that both $O_k$ and $O_p$ are located to the right of $A$. Additionally, there are $p-k$ clients located to the right of $O_k$, denote the location by $B$,  with $d(B,O_k) = 1$. Lastly, $O_p$ is located to the right of $B$ with $d(B, O_p) = \delta$. Note that in this case we have $\F = \{O_k, O_p\}$, $\C = \{A, B\}$. An illustration of the example is given in Figure \ref{fig:lo_bound}.
\begin{figure}[H]
  \begin{center}
            \begin{tikzpicture}[
            roundnode/.style={circle, draw=green!60, fill=green!5, very thick, minimum size=7mm},
            squarednode/.style={rectangle, draw=blue!60, fill=blue!5, very thick, minimum size=5mm},
        ]
            \node[roundnode, label=$k$ clients] (i) {$A$};
            \node[squarednode] (B) [right=2cm of i]{$O_k$};
            \node[roundnode, label=$p-k$ clients] (j) [right=2cm of B]{$B$};
            \node[squarednode] (A) [right=.5cm of j]{$O_p$};
            \draw (i) -- (B) node[draw=none,fill=none,font=\scriptsize,midway,below] {$1$};
            \draw (B) -- (j) node[draw=none,fill=none,font=\scriptsize,midway,below] {$1$};
            \draw (j) -- (A) node[draw=none,fill=none,font=\scriptsize,midway,below] {$\delta$};
        \end{tikzpicture}
  \caption{An instance with two client locations $A, B$ and two possible facility locations $O_k, O_p$ with distances between any two adjacent locations labeled.}
  \label{fig:lo_bound}
  \end{center}
\end{figure}
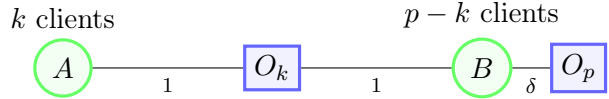

	We claim that by choosing $\delta = \beta - 2$ we would have $\alpO{k}{p} = \alpO{p}{k} = \beta$. To show this, we first need to verify that $\delta \geq 0$. Recall that $x = \frac{p}{k} > 4$, $\beta = f(x) = 1 - x^{-1} + \sqrt{x^{-2}-{2}{x}^{-1}+2}$. For simplicity, let $y = \frac{1}{x}, 0 < y < \frac{1}{4}$, $g(y) = 1 - y + \sqrt{y^2 - 2y +2}$ so $\beta = g(y)$. We will show that $g(y)$ is monotone decreasing in $0 < y < \frac{1}{4}$. Note that 
	\[
	g'(y) = \frac{y-1}{\sqrt{y^2-2y+2}} - 1 = \frac{y-1}{\sqrt{(y-1)^2+1}} - 1 <  \frac{y-1}{y-1} - 1 < 1 - 1 = 0
	\]
	Therefore, we can see that $g(y)$ is monotone decreasing in $0 < y < \frac{1}{4}$, and reaches its minimum value at $\frac{1}{4}$, with $g(\frac{1}{4}) = 1 - \frac{1}{4} + \sqrt{\frac{1}{16} - \frac{1}{2}+2} = 2$, this means that $\beta > 2$, $\delta = \beta - 2 > 2-2 = 0$. Then, for the above example,
	\[
	\begin{aligned}
		\alpO{k}{p} &= \frac{\costO{k}{p}}{\costO{k}{k}} = \frac{(2 + \beta - 2)k}{k} = \beta > 2\\
		\alpO{p}{k} &= \frac{\costO{p}{k}}{\costO{p}{p}} = \frac{k+(p-k)}{(2 + \beta - 2)k + (\beta - 2)(p-k)}\\
		&=  \frac{p}{\beta k + \beta p - 2p - \beta k + 2k}\\
		&=  \frac{p}{\beta p - 2p + 2k} \\
		&= \frac{1}{\beta -2 +2\cdot\frac{k}{p}} \\
		&= \left(1 - y + \sqrt{y^2 - 2y +2} - 2 + 2y\right)^{-1}\\
		&= \left(-1 + y + \sqrt{y^2 - 2y +2}\right)^{-1}\\
		&= 1 - y + \sqrt{y^2 - 2y +2}\\
		&= \beta > 2
	\end{aligned}
	\]
	Since we only have two possible facility locations to choose from, and both $\alpO{k}{p}$ and $\alpO{p}{k}$ are greater than 1, we can verify that both $O_k$ and $O_p$ are indeed the optimal location for their respective objectives. Hence, we can conclude that $\min (\alpha_k(O_p), \alpha_p(O_k)) \geq \beta$, which means that both $\alpha_k(O_p)$ and $\alpha_p(O_k)$ are larger than or equal to $\beta$. Therefore, $\beta$ is a tight upper bound for $\min (\alpha_k(O_p), \alpha_p(O_k))$ when $x = \frac{p}{k} > 4$.
	
	We will now consider the case when $1 \leq x = \frac{p}{k} \leq 4$. Similarly to the previous case, let $\beta = \sqrt{x} = \sqrt{\frac{p}{k}}$, we will show that $\max (\min (\alpha_k(O_p), \alpha_p(O_k))) \leq \beta$. To do this, we will consider two cases. First, assume $\alpO{p}{k} > \sqrt{\frac{p}{k}}$, then by Theorem \ref{theorem:2_genkgen}, we have $\alpha_k(O_p) \leq \frac{p}{k}\cdot \frac{1}{\alpha_p(O_k)} < \frac{p}{k}\cdot \sqrt{\frac{k}{p}} = \sqrt{\frac{p}{k}}$. Otherwise, we would have $\alpO{p}{k} \leq \sqrt{\frac{p}{k}}$. Hence, we can see that $\max (\min (\alpha_k(O_p), \alpha_p(O_k))) \leq \sqrt{\frac{p}{k}} = \beta$. This result implies that one of $\alpha_k(O_p)$ and $\alpha_p(O_k)$ is less than or equal to $\beta$ as desired. Now, to show that bound is tight, we will construct an example such that $\alpha_p(O_k) = \alpha_k(O_p) = \beta$. Suppose in a metric space we have three locations $A$, $O_k$ and $O_p$ such that $\F = \{O_k, O_p\}$, $\C = \{A, O_p\}$. Then, suppose there are $k$ clients located at $A$ and $p-k$ clients located at $O_p$, with $d(O_p, O_k) = 1$, $d(O_k, A) = 1$, $d(O_p, A) = \sqrt{\frac{p}{k}}$. An illustration of the example is given in Figure \ref{fig:up_bound}.
	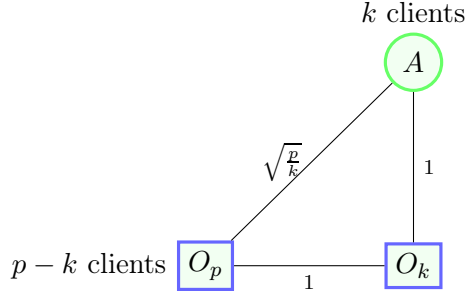
\begin{figure}[H]
  		\begin{center}
  			\begin{tikzpicture}[
            roundnode/.style={circle, draw=green!60, fill=green!5, very thick, minimum size=7mm},
            squarednode/.style={rectangle, draw=blue!60, fill=green!5, very thick, minimum size=5mm},
        ]
            \node[roundnode, label=$k$ clients] (i) {$A$};
            \node[squarednode] (B) [below=2cm of i]{$O_k$};
            \node[squarednode, label=left:$p-k$ clients] (A) [left=2cm of B]{$O_p$};
            \draw (i) -- (B) node[draw=none,fill=none,font=\scriptsize,midway,right] {$1$};
            \draw (B) -- (A) node[draw=none,fill=none,font=\scriptsize,midway,below] {$1$};
            \draw (A) -- (i) node[draw=none,fill=none,font=\scriptsize,midway,left] {$\sqrt{\frac{p}{k}}$};
        \end{tikzpicture}
  			\caption{An instance with two client locations $A, O_p$ and two possible facility locations $O_k, O_p$ with distances between any two adjacent locations labeled.}
  			\label{fig:up_bound}
  		\end{center}
	\end{figure}
	 Recall that $1 \leq \frac{p}{k} \leq 4$, so $1 \leq d(O_p, A) = \sqrt{\frac{p}{k}} \leq 2$. This mean that $d(O_p, O_k) + d(O_k, A) = 2 \geq d(O_p, A)$ so we can verify that the triangle inequality is satisfied, and this example forms a valid metric space.
		In addition, recall that $k < p$, so $\beta = \sqrt{\frac{p}{k}} > 1$. Then, note that for this example
	\[
	\begin{aligned}
		\alpO{k}{p} &= \frac{\costO{k}{p}}{\costO{k}{k}} = \frac{k\cdot d(O_p, A)}{k} = \sqrt{\frac{p}{k}} = \beta > 1\\
		\alpO{p}{k} &= \frac{\costO{p}{k}}{\costO{p}{p}} = \frac{k + p - k}{k\cdot d(O_p, A)}= \frac{p}{k}\cdot \sqrt{\frac{k}{p}} = \sqrt{\frac{p}{k}} = \beta > 1
	\end{aligned}
	\]
	Since we only have two possible facility locations to choose from, and both $\alpO{k}{p}$ and $\alpO{p}{k}$ are greater than 1, we can verify that both $O_k$ and $O_p$ are indeed the optimal location for their respective objectives. Hence, we can conclude that $\min (\alpha_k(O_p), \alpha_p(O_k)) \geq \beta$, which means that both $\alpha_k(O_p)$ and $\alpha_p(O_k)$ are larger than or equal to $\beta$. Then, combined with the results we obtained for the case when $x = \frac{p}{k} > 4$, we can conclude that $\beta$ is a tight upper bound for $\min (\alpha_k(O_p), \alpha_p(O_k))$ for all $x \geq 1$, and that our approximation bounds are tight.
\end{proof}

As a result, as shown in Figure \ref{fig:2_ub}, the function $f(x), x = \frac{p}{k}$ we obtained from Theorem \ref{theorem:2_bound} is continuous and monotone increasing over $\frac{p}{k} \geq 1$. In addition, note that when the difference between $k$ and $p$ is sufficiently large such that the value of $\frac{p}{k}$ approaches $+\infty$, we have $\beta \approx f(\infty) = 1 + \sqrt{2}$, which matches the second result in Corollary \ref{coro:2_gen}. Moreover, note that as the value of $\frac{p}{k}$ approaches 1, the value of $\beta$ also approaches $f(1) = 1$. This shows that the upper bound of the smaller value of $\alpha_k(O_p)$ and $\alpha_p(O_k)$ would approach 1 as the difference between $k$ and $p$ becomes smaller, as expected. 
In other words, there must always exist an outcome such that the approximation ratio for both $c_k$ and $c_p$ is between 1 and $1 + \sqrt{2}$, this bound is tight, and in fact choosing either $O_k$ or $O_p$ is enough to achieve it.


\section{Simultaneously Approximating Multiple Objectives}
\label{section:mul_obj}
Now that we have found a tight upper bound for the approximation ratio for two objectives, we want to see what would happen if we have more objectives. 
Assume we have a set of $q \geq 2$ distinct integers in $[1, n]$, $|\C| = n$, $\K = \{k_1, k_2, \cdots, k_q\}$, arranged in increasing order such that $k_1 < k_2 < \cdots < k_q$. Then, the set of objectives that we would like to simultaneously optimize is $\K$ \footnote{As defined in Section \ref{section:prelim}, the set of objectives considered is $\{c_{k_1}, c_{k_2}, \cdots, c_{k_q}\}$, but is denoted by the set of integers $\K$.}.
First, immediately from Corollary \ref{coro:2_gen}, we can get the following result:

\begin{theorem}
Consider the optimal facility location $O_{n}$ that minimizes $c_{n}$. By picking $O_{n}$, we obtain a 3 approximation for all other objectives $c_k$ for $k\leq n$.

\begin{proof}
Note that for any $k < n$, by Corollary \ref{coro:2_gen}, we have $\alpO{k}{n} \leq 3$. Therefore, we can conclude that by picking $O_{n}$, we would obtain a 3 approximation for every other objective.
\end{proof}
\label{theorem:mul3}
\end{theorem}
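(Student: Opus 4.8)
The plan is to obtain this as an immediate consequence of the two-objective analysis, taking $p=n$ as the "larger" objective. Fix any $k$ with $1\le k<n$, and apply the first part of Corollary~\ref{coro:2_gen} to the pair $(k,n)$: that corollary states precisely that choosing $O_p$ gives a $(3,1)$-approximation for simultaneously minimizing $c_k$ and $c_p$, so in particular $\alpha_k(O_n)\le 3$. For the one remaining value $k=n$ there is nothing to prove, since $\alpha_n(O_n)=c_n(O_n)/c_n(O_n)=1\le 3$ by definition. Combining the two cases yields that the single facility $O_n$ is simultaneously a $3$-approximation for every $c_k$ with $k\le n$, which is the claim.

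Equivalently, one can invoke Theorem~\ref{theorem:2_gen} directly rather than its corollary: for $k<n$ it gives $\alpha_k(O_n)\le \frac{1}{\alpha_n(O_k)}+2$, and since $\alpha_n(O_k)\ge 1$ always holds by definition of the approximation ratio, the right-hand side is at most $3$. I would likely present it this way to make transparent that the bound of $3$ is nothing more than the ``$+2$'' of Theorem~\ref{theorem:2_gen} plus the trivial ``$\frac{1}{\alpha_n(O_k)}\le 1$''. This is also the promised simple route to the $3$-approximation for \emph{all} centrum objectives mentioned in the Introduction.

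There is essentially no obstacle here; the content is entirely inherited from Theorem~\ref{theorem:2_gen}. The only point requiring a moment's care is that Corollary~\ref{coro:2_gen} (and Theorem~\ref{theorem:2_gen}) is stated only for $k<p$, so the boundary case $k=n$ must be handled separately — but it is trivial. It is also worth emphasizing in the writeup that the argument is uniform: the \emph{same} location $O_n$ simultaneously achieves the bound for all objectives at once, so no cross-objective trade-off is incurred, which is what makes this a genuine simultaneous approximation rather than merely a pointwise one.
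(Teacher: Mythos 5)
Your proof is correct and follows essentially the same route as the paper: the paper also simply invokes Corollary~\ref{coro:2_gen} (whose first part is itself the ``$\frac{1}{\alpha_p(O_k)}\le 1$ plus Theorem~\ref{theorem:2_gen}'' argument you spell out) with $p=n$ to get $\alpha_k(O_n)\le 3$ for all $k<n$. Your explicit handling of the trivial boundary case $k=n$ is a minor addition the paper leaves implicit.
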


While the above theorem gives us a very simple way of obtaining a 3-approximation for all objectives (and in fact can also be obtained as a simple consequence of the results in \cite{goel2018relating}), we are interested in a more fine-grained analysis of when better approximations are possible. What if we are only interested in approximating a few objectives simultaneously, instead of all of them (in other words, what if $|\K|$ is small)? Or what if the set $\K$ has some nice properties? Towards answering these questions, we first make the following observation from Theorem \ref{theorem:2_genkgen}:

\begin{corollary}
For $1 \leq k < p \leq n$, we have $\alpha_k(O_p) \leq \frac{p}{k}$ and $\alpha_p(O_k) \leq \frac{p}{k}$.

\begin{proof}
Assume $1 \leq k < p \leq n$. Recall that Theorem \ref{theorem:2_genkgen} states that $\alpha_k(O_p) \leq \frac{p}{k}\cdot\frac{1}{\alpha_p(O_k)}$ and $\alpha_p(O_k) \leq \frac{p}{k}\cdot\frac{1}{\alpha_k(O_p)}$. But by definition we have $\alpha_p(O_k) \geq 1$ and $\alpha_p(O_k) \geq 1$, so we can conclude that $\alpha_k(O_p) \leq \frac{p}{k}$ and $\alpha_p(O_k) \leq \frac{p}{k}$.
\end{proof}
\label{coro:gen_re}
\end{corollary}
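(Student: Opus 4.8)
The plan is to derive this directly from Theorem~\ref{theorem:2_genkgen}, which is already available to us. That theorem gives the two reciprocal bounds $\alpha_k(O_p) \leq \frac{p}{k}\cdot\frac{1}{\alpha_p(O_k)}$ and $\alpha_p(O_k) \leq \frac{p}{k}\cdot\frac{1}{\alpha_k(O_p)}$, so all that remains is to control the reciprocal factors on the right-hand sides.

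The key observation is that, by the definition of the approximation ratio in Section~\ref{section:prelim}, we have $\alpha_k(O_p) = \frac{c_k(O_p)}{c_k(O_k)} \geq 1$ and likewise $\alpha_p(O_k) = \frac{c_p(O_k)}{c_p(O_p)} \geq 1$, since $O_k$ and $O_p$ are the minimizers of $c_k$ and $c_p$ respectively. Hence $\frac{1}{\alpha_p(O_k)} \leq 1$ and $\frac{1}{\alpha_k(O_p)} \leq 1$. Substituting these into the two inequalities from Theorem~\ref{theorem:2_genkgen} immediately yields $\alpha_k(O_p) \leq \frac{p}{k}$ and $\alpha_p(O_k) \leq \frac{p}{k}$.

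There is no real obstacle here: the statement is a one-step weakening of Theorem~\ref{theorem:2_genkgen}, obtained by discarding the (beneficial) reciprocal factor. The only thing worth being careful about is to cite the correct direction of each inequality and to note explicitly that $O_k$, $O_p$ are optimal for their own objectives so that the ratios are genuinely at least $1$; both points are already established in the preliminaries and in the statement of Theorem~\ref{theorem:2_genkgen}.
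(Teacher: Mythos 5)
Your proposal is correct and follows exactly the paper's own argument: apply Theorem~\ref{theorem:2_genkgen} and then use the fact that $\alpha_k(O_p)\geq 1$ and $\alpha_p(O_k)\geq 1$ (by optimality of $O_k$ and $O_p$) to drop the reciprocal factors. Nothing is missing; your write-up is, if anything, slightly more careful than the paper's, which contains a small typo in stating the two lower bounds.
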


Corollary \ref{coro:gen_re} indicates that when the difference between any two objectives $c_k$ and $c_p$ is sufficiently small, both $\alpO{p}{k}$ and $\alpO{k}{p}$ would also be small. One direct observation we can see from this is when $\K = \{k, k+1, \cdots, 4k-1, 4k\}$ for some $1 \leq k \leq n$, by picking $O_{2k}$ we can get a 2 approximation for every other objective in $\K$. The reason for this is because $2k$ is of a factor of 2 larger than the smallest element in $\K$ and of a factor of 2 smaller than largest element in $\K$. Then by Corollary \ref{coro:gen_re} we must have for any $k \in \K \setminus \{2k\}$, $\alpO{k}{2k} \leq 2$. Note that this is a better result than what we have shown in Theorem \ref{theorem:mul3} but is only true for special cases when none of the objectives in $\K$ are very different. Now, we want to see if we can obtain a better result for multiple objectives in general. To do this, we will first construct a graph representation for this problem.

We construct a complete directed graph $G = (V,E)$ as follows. First, for each $k \in \K$, we will make a node representing $O_k$, which is the optimal facility location for objective $c_k$. For simplicity, we will denote this node by $O_k$. Then, for every pair $i,j \in \K$, $i \neq j$, we will make two edges $(O_i, O_j)$ and $(O_j, O_i)$ with weight $\alpO{j}{i}$ and $\alpO{i}{j}$ respectively. As an example, Figure \ref{fig:graph_rep} is an illustration of $G$ for three objectives $c_{i}$, $c_{j}$ and $c_k$. Note that for every node $O_k$, $k \in \K$, by choosing $O_k$, we would get a $(\alpha_{k_1}(O_k), \alpha_{k_2}(O_k), \cdots, \alpha_{k_q}(O_k))$ approximation for minimizing the set of objectives $\K$ but these values are exactly the weights of all the edges going out of node $O_k$. Therefore, instead of looking at individual approximation ratios and their relationship, we will utilize this graph representation. 

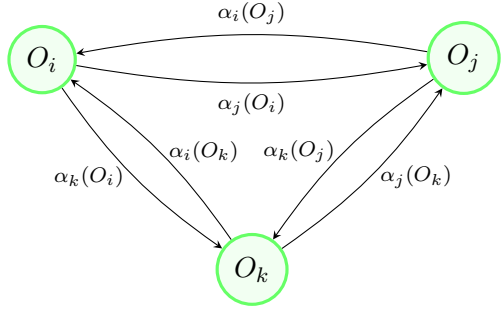
\begin{figure}[H]
  \begin{center}
    \begin{tikzpicture}
        [roundnode/.style={circle, draw=green!60, fill=green!5, very thick, minimum size=7mm},
            squarednode/.style={rectangle, draw=red!60, fill=red!5, very thick, minimum size=5mm},
        ]
            \node[roundnode] (k) {$O_k$};
            \node[roundnode] (i) [above left = 3cm of k]{$O_i$};
            \node[roundnode] (j) [above right = 3cm of k]{$O_j$};
            \draw[-stealth] (k) to [bend right=10] node[draw=none,fill=none,font=\scriptsize,midway,right] {$\alpha_i(O_k)$} (i);
            \draw[-stealth] (i) to [bend right=10] node[draw=none,fill=none,font=\scriptsize,midway,left] {$\alpha_k(O_i)$} (k);
            \draw[-stealth] (i) to [bend right=10] node[draw=none,fill=none,font=\scriptsize,midway,below] {$\alpha_j(O_i)$} (j);
            \draw[-stealth] (j) to [bend right=10] node[draw=none,fill=none,font=\scriptsize,midway,above] {$\alpha_i(O_j)$} (i);
            \draw[-stealth] (j) to [bend right=10] node[draw=none,fill=none,font=\scriptsize,midway,left] {$\alpha_k(O_j)$} (k);
            \draw[-stealth] (k) to [bend right=10] node[draw=none,fill=none,font=\scriptsize,midway,right] {$\alpha_j(O_k)$} (j);
    \end{tikzpicture}
  \caption{An example graph representation of three objectives, $c_{i}$, $c_{j}$ and $c_k$. The green circles are nodes, arrows are directed edges with their respective weight labeled.}
  \label{fig:graph_rep}
  \end{center}
\end{figure}

Our goal is to find some value $\beta_q < 3$ such that for $|\K| = q$, there must exist some $k \in \K$ such that choosing $O_k$ would be at worst a $\beta_q$ approximation for every other objective in $\K$. Note that the objectives in $\K$ can be arbitrarily far apart, for example they could include the max $c_1$ and the sum $c_n$ objectives. In fact, we can obtain the following results.\\

\begin{theorem}
For the directed graph $G$ defined above, let $|\K| = |V| = q$, and consider $\beta_j$ that satisfies
$$(\beta_j - 2)^{j-1}\beta_j = 1, \quad \beta_j \geq 1 + \sqrt{2}$$
for some $2 \leq j \leq q$. Then there does not exist a cycle of size j such that all of the edges in the cycle have weight strictly larger than $\beta_j$.
\label{theorem:mul_cyc}
\end{theorem}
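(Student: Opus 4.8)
The plan is to argue by contradiction and by induction on the cycle length $j$. Suppose there is a directed cycle $C$ on vertices $O_{m_1},\dots,O_{m_j}$ (in cyclic order) all of whose edges have weight strictly larger than $\beta_j$. First I would record two facts about $\beta_j$ coming from $(\beta_j-2)^{j-1}\beta_j=1$ and $\beta_j\ge 1+\sqrt 2$: since $\beta_j\ge 1+\sqrt2>2$ we automatically get $\beta_j<3$ (otherwise the left side would be $\ge 3$), and dividing the defining equation by $(\beta_j-2)$ gives the identity $(\beta_j-2)^{j-2}\beta_j=\frac{1}{\beta_j-2}$, which is the relation the bookkeeping must ultimately close against.

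Next I would normalize: rotate the labels so that $O_{m_1}$ is the vertex of \emph{largest} index on $C$, i.e.\ $m_1>m_t$ for all $t\ge 2$. Then the first edge $O_{m_1}\to O_{m_2}$ is ``descending'' ($m_1>m_2$), and its weight is $\alpha_{m_2}(O_{m_1})>\beta_j$; applying Theorem~\ref{theorem:2_gen} to the pair $(m_2,m_1)$ gives $\alpha_{m_1}(O_{m_2})\le \frac{1}{\alpha_{m_2}(O_{m_1})-2}<\frac{1}{\beta_j-2}$. The remaining $j-1$ edges of $C$ form a directed path $P:\,O_{m_2}\to O_{m_3}\to\cdots\to O_{m_j}\to O_{m_1}$ whose terminal vertex $O_{m_1}$ has the maximum index among all vertices on $P$, and all $j-1$ edges of $P$ still have weight $>\beta_j$. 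The core of the argument is then the following path-composition estimate, which I would prove by induction on the number of edges $\ell$: if $O_{a_0}\to O_{a_1}\to\cdots\to O_{a_\ell}$ is a directed path in $G$ with $a_\ell=\max_t a_t$ and every edge of weight $>w$ for some $w\in(2,3)$, then $\alpha_{a_\ell}(O_{a_0})>(w-2)^{\ell-1}w$. Granting this with $w=\beta_j$, $\ell=j-1$, applied to $P$, we get $\alpha_{m_1}(O_{m_2})>(\beta_j-2)^{j-2}\beta_j=\frac{1}{\beta_j-2}$, which contradicts the upper bound from the descending edge. (For $j=2$ the path $P$ is a single edge and the composition estimate is vacuous, recovering $(\beta_2-2)\beta_2=1$, i.e.\ $\beta_2=1+\sqrt2$.)

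The main obstacle, and essentially the only real work, is the path-composition estimate. The base case $\ell=1$ is just the hypothesis on the single edge. For the inductive step I would peel off the first edge $O_{a_0}\to O_{a_1}$ (weight $>w$) and apply the inductive hypothesis to $O_{a_1}\to\cdots\to O_{a_\ell}$ (whose endpoint is still the maximum index), so it suffices to show $c_{a_\ell}(O_{a_0})\ge (w-2)\,c_{a_\ell}(O_{a_1})$ whenever $\alpha_{a_1}(O_{a_0})>w$ and $a_0,a_1\le a_\ell$. I expect this to require splitting on the size of the inter-optimum distance $d(O_{a_0},O_{a_1})$: when it is small, use the triangle inequality $c_{a_\ell}(O_{a_0})\ge c_{a_\ell}(O_{a_1})-a_\ell\, d(O_{a_0},O_{a_1})$ together with Lemma~\ref{lemma:diff_sum}; when it is large, instead lower-bound $c_{a_\ell}(O_{a_0})$ directly via $c_{a_\ell}(O_{a_0})\ge c_{a_1}(O_{a_0})>w\, c_{a_1}(O_{a_1})$, controlling $c_{a_\ell}(O_{a_1})$ by $\frac{a_\ell}{a_1}c_{a_1}(O_{a_1})$ with Lemma~\ref{lemma:k_cost} and using $\alpha_{a_1}(O_{a_0})\le \frac{1}{\alpha_{a_1}(O_{a_1})}+2<3$ from Theorem~\ref{theorem:2_gen}. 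The delicate point is that the per-step loss must come out to be exactly the factor $(w-2)$ that the recursion demands; this is precisely where the constraint $\beta_j\ge 1+\sqrt2$ (ensuring $2<w<3$) and the exact form of $(\beta_j-2)^{j-1}\beta_j=1$ are used, so that the telescoped lower bound on $\alpha_{m_1}(O_{m_2})$ lands exactly on $\frac{1}{\beta_j-2}$ and collides with the upper bound from the descending edge.
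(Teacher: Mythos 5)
Your overall architecture is sound and genuinely different from the paper's: the paper chains the \emph{optimal} costs $c_{k_i}(O_{k_i})$ around the whole cycle using Lemmas \ref{lemma:ptok} and \ref{lemma:ktop} (with the ratios $k_i/k_{i+1}$ telescoping and the observation that at least one edge must be ``ascending''), whereas you isolate the one descending edge out of the maximum-index vertex, bound $\alpha_{m_1}(O_{m_2})$ from above via Theorem \ref{theorem:2_gen}, and bound it from below by composing a \emph{fixed} objective $c_{m_1}$ along the remaining path. The bookkeeping at both ends is correct ($(\beta_j-2)^{j-2}\beta_j = \tfrac{1}{\beta_j-2}$ collides with the upper bound $\tfrac{1}{\beta_j-2}$). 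However, there is a genuine gap exactly where you flag it: the per-step inequality $c_{a_\ell}(O_{a_0})\ge (w-2)\,c_{a_\ell}(O_{a_1})$ is asserted but not proved, and the two-case sketch you give for it does not go through. In the ``large $d$'' branch, the chain $c_{a_\ell}(O_{a_0})\ge c_{a_1}(O_{a_0})>w\,c_{a_1}(O_{a_1})\ge w\cdot\frac{a_1}{a_\ell}c_{a_\ell}(O_{a_1})$ only yields the factor $w\cdot\frac{a_1}{a_\ell}$, which can be far below $w-2$ when $a_1\ll a_\ell$; in the ``small $d$'' branch you would need $a_\ell\, d(O_{a_0},O_{a_1})\le (3-w)\,c_{a_\ell}(O_{a_1})$, and nothing in your hypotheses supplies such an upper bound on $d(O_{a_0},O_{a_1})$.

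The good news is that the inequality you need is true, and no case split is required: the edge hypothesis forces $d(O_{a_0},O_{a_1})$ to be \emph{large}, and only the reverse triangle inequality is needed. Concretely, $w\,c_{a_1}(O_{a_1})<c_{a_1}(O_{a_0})\le c_{a_1}(O_{a_1})+a_1\,d(O_{a_0},O_{a_1})$ (by Lemma \ref{lemma:diff_sum}) gives $d(O_{a_0},O_{a_1})>\frac{w-1}{a_1}c_{a_1}(O_{a_1})\ge\frac{w-1}{a_\ell}c_{a_\ell}(O_{a_1})$, the last step by Lemma \ref{lemma:k_cost} since $a_1\le a_\ell$. Then, summing $d(i,O_{a_0})\ge d(O_{a_0},O_{a_1})-d(i,O_{a_1})$ over the top $a_\ell$ clients of $O_{a_1}$ and using Lemma \ref{lemma:diff_sum} again, $c_{a_\ell}(O_{a_0})\ge a_\ell\, d(O_{a_0},O_{a_1})-c_{a_\ell}(O_{a_1})>(w-2)\,c_{a_\ell}(O_{a_1})$, which is exactly your recursion step (note it needs only $a_1\le a_\ell$, which your normalization guarantees). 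With this lemma supplied, your induction and the final contradiction are correct, so the proposal is repairable into a complete proof; as written, though, the central step is missing and the route you sketch toward it would fail.
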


\begin{theorem}
Let $|\K| = q$, and consider $\beta_q$ as defined in Theorem \ref{theorem:mul_cyc}. Then there must exist some $k \in \K$ such that choosing $O_k$ would be at worst a $\beta_q$ approximation for every other objective in $\K$.
\label{theorem:mul_gen}
\end{theorem}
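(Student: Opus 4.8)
\textbf{Proof proposal for Theorem~\ref{theorem:mul_gen}.}
The plan is to deduce the statement from Theorem~\ref{theorem:mul_cyc} together with one elementary fact about directed graphs. Recall that choosing $O_k$ gives approximation ratio $\alpha_{k'}(O_k)$ for the objective $c_{k'}$, and that these are exactly the weights of the edges of $G$ leaving the vertex $O_k$ (the ``missing'' self-loop would carry weight $\alpha_k(O_k)=1\le\beta_q$). So the theorem is equivalent to the claim that $G$ has a vertex all of whose outgoing edges have weight at most $\beta_q$, and I would attack that claim by contradiction.

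First I would record a preliminary monotonicity fact: $\beta_j\le\beta_q$ for every $2\le j\le q$. Writing $H_j(\beta)=(\beta-2)^{j-1}\beta$, one checks that on $\beta>2$ we have $H_j'(\beta)=(\beta-2)^{j-2}(j\beta-2)>0$, with $H_j(2^+)=0$ and $H_j(3)=3$, so $H_j(\beta)=1$ has a unique root in $(2,3)$; the side condition $\beta_j\ge 1+\sqrt2$ is precisely what selects this root (for $\beta<2$ there can be spurious roots, e.g.\ $\beta=1$ when $j=3$). Since $\beta_2=1+\sqrt2$ and $\beta_j-2\in(0,1)$, we get $H_{j+1}(\beta_j)=(\beta_j-2)H_j(\beta_j)<1=H_{j+1}(\beta_{j+1})$, and monotonicity of $H_{j+1}$ forces $\beta_{j+1}>\beta_j$; hence $\beta_j$ is strictly increasing in $j$, and in particular $\beta_j\ge 1+\sqrt2$ always holds.

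Now suppose, for contradiction, that every vertex of $G$ has at least one outgoing edge of weight strictly greater than $\beta_q$. I would pick one such ``heavy'' edge out of each vertex, obtaining a subgraph $H$ in which every vertex has out-degree exactly $1$. Starting at any vertex and repeatedly following the unique heavy edge of $H$ produces an infinite walk through a finite vertex set, which must revisit some vertex; the segment between the first repetition is a directed cycle $C\subseteq H$. Since $G$ has no self-loops, $C$ has some length $j$ with $2\le j\le q$, and by construction every edge of $C$ has weight $>\beta_q\ge\beta_j$ — exactly the configuration forbidden by Theorem~\ref{theorem:mul_cyc}, a contradiction. Therefore some vertex $O_k$ has all outgoing edges of weight at most $\beta_q$, i.e.\ $\alpha_{k'}(O_k)\le\beta_q$ for every $k'\in\K$, which is the assertion.

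The substantive work all sits in Theorem~\ref{theorem:mul_cyc}, which I am allowed to assume. Within the present argument the one point that genuinely matters (and the main thing to get right) is the monotonicity $\beta_j\le\beta_q$: without it, a cycle of length $j<q$ would not immediately contradict anything, so I need the full family of ``no short heavy cycle'' statements from Theorem~\ref{theorem:mul_cyc} and the fact that raising the threshold from $\beta_j$ to $\beta_q$ is safe. Everything else is the standard observation that a finite digraph with positive minimum out-degree contains a cycle.
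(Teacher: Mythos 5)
Your proposal is correct and follows essentially the same route as the paper: assume for contradiction that every vertex has a heavy outgoing edge, select one per vertex, extract a directed cycle of some length $j \le q$ from this out-degree-one subgraph, and invoke the monotonicity $\beta_j \le \beta_q$ (the paper's Lemma~\ref{lemma:beta}) to contradict Theorem~\ref{theorem:mul_cyc}. Your re-derivation of the monotonicity via $H_{j+1}(\beta_j) = (\beta_j-2)H_j(\beta_j) < 1$ is a slightly cleaner argument than the paper's proof by contradiction, but the content is identical.
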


Note that Theorem \ref{theorem:mul_cyc} reveals the properties of the cycles in the graph $G$, while Theorem \ref{theorem:mul_gen} indicates that $\beta_q$ defined in Theorem \ref{theorem:mul_cyc} is the upper bound of the approximation ratio for simultaneously approximating all the objectives in $\K$, which is exactly what we are looking for.

To show the above theorems, we will first make several observations. First, note that we introduced a new value $\beta_j$, for any $j \in [2,q]$. In fact, there are some useful properties of $\beta_j$. These properties are:\\

\begin{lemma} 
Consider $\beta_j$ as defined in Theorem \ref{theorem:mul_cyc}, with $j \geq 2, j \in \mathbb{N}$, then
\begin{enumerate}
    \item $\beta_j$ exists and is unique.
    \item $1 + \sqrt{2} \leq \beta_j < 3$
	\item $\beta_j$ increases as the value of j increases. 
	
\end{enumerate}

\begin{proof} Let $f(x) = (x-2)^{j-1}x - 1$ for some $j \geq 2$, then $\beta_j$ is the root of $f(x)$ that is larger than or equal to $1 + \sqrt{2}$ by definition. First, suppose that such $\beta_j$ exists; we will show that $1 + \sqrt{2} \leq \beta_j < 3$. To do this, suppose there exists some $\beta_j$ such that $\beta_j \geq 3$. Recall that $j \geq 2, j-1 \geq 1$, and $\beta_j \geq 3, \beta_j - 2 \geq 1$, so $(\beta_j-2)^{j-1} \geq 1$. Then, we have  
$$f(\beta_j) = (\beta_j-2)^{j-1}\beta_j - 1 \geq 1 \cdot 3 - 1 = 2$$
But $f(\beta_j) = 0$, this is a contradiction. Therefore, we must have $\beta_j < 3$. 

Now we will show that $f(x)$ has a unique root within the range $[1+\sqrt{2}, 3)$. Let $1 + \sqrt{2} \leq x < 3$, so $x - 2 > 0$. Note that we also have $j-1 \geq 1$, so $g(x) = (x-2)^{j-1}$ is continuous on $x \in [1+\sqrt{2}, 3)$. In addition, since $h(x) = x$ is also continuous on $x \in [1+\sqrt{2}, 3)$, we have that $f(x) = g(x)\cdot h(x) - 1$ is continuous on $x \in [1+\sqrt{2}, 3)$. Now, note that we have 
$$f'(x) = (x-2)^{j-2}(jx-2).$$
Here recall that we have $x-2 > 0$, so $(x-2)^{j-2} > 0$. Besides, we have $j \geq 2$ so $jx - 2 \geq 2(1 + \sqrt{2})-2 > 0$. This means that
$$f'(x) > 0\cdot 0 = 0$$
Therefore, $f(x)$ is monotone increasing on $x \in [1+\sqrt{2}, 3)$.
Moreover, we have that 
$$f(1+\sqrt{2}) = (\sqrt{2}-1)^{j-1}(1+\sqrt{2})-1 = (\sqrt{2}-1)^{j-2}-1$$
Here note that $\sqrt{2}-1 < 1$, $j-2 \geq 0$, so $(\sqrt{2}-1)^{j-2} \leq 1$, then we have 
$$f(1+\sqrt{2}) \leq 1-1 = 0$$
Likewise, we can see that
$$f(3) = (3-2)^{j-1}(3)-1 = (1)^{j-1}\cdot 3-1 = 3-1 = 2$$
Then, since we have shown that $f(x)$ is continuous on $x \in [1+\sqrt{2}, 3)$, $f(1+\sqrt{2}) \leq 0$, $f(3) = 2$, by the intermediate value theorem there must exist some $\beta_j \in [1+\sqrt{2}, 3)$ such that $f(\beta_j)=0$. In addition, since $f(x)$ is monotone increasing on $x \in [1+\sqrt{2}, 3)$, such $\beta_j$ is unique. Therefore, we can conclude that $\beta_j$ exists and is unique for any $j \geq 2$.

Lastly, we will show that $\beta_j$ increases as the value of $j$ increases. Suppose otherwise, there exist some $c \geq 2, d \geq 2$ such that $\beta_c < \beta_d$ but $c > d$. By definition we know that $(\beta_c-2)^{c-1}\beta_c = 1, (\beta_d-2)^{d-1}\beta_d = 1$. Now since $c > d$, $c - d \geq 1$, we have 
$$(\beta_c-2)^{c-1}\beta_c = (\beta_c-2)^{c-d}(\beta_c-2)^{d-1}\beta_c < (\beta_d-2)^{c-d}(\beta_d-2)^{d-1}\beta_d = (\beta_d-2)^{c-d}$$
Here note that we have shown $\beta_d < 3$, so $\beta_d-2 < 3-2 = 1$, we then have 
$$(\beta_c-2)^{c-1}\beta_c < 1^{c-d} = 1$$
But $(\beta_c-2)^{c-1}\beta_c = 1$, this is a contradiction. Hence, we must have $c < d$. This means that $\beta_j$ increases as the value of $j$ increases. 
\end{proof}
\label{lemma:beta}
\end{lemma}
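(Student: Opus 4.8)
\textbf{Proof proposal for Lemma \ref{lemma:beta}.}

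The plan is to introduce, for each integer $j \ge 2$, the auxiliary function $f_j(x) = (x-2)^{j-1}x - 1$, so that by definition $\beta_j$ is a value satisfying $f_j(\beta_j) = 0$ and $\beta_j \ge 1+\sqrt{2}$. All three parts will follow from understanding the behaviour of $f_j$ on the ray $I = [1+\sqrt 2, \infty)$ and, more precisely, on $[1+\sqrt 2, 3)$. First I would note that $f_j$ is a polynomial, hence continuous and differentiable, and compute $f_j'(x) = (x-2)^{j-2}(jx-2)$. For $x \ge 1+\sqrt 2$ both factors are strictly positive: $x - 2 \ge \sqrt 2 - 1 > 0$, and $jx - 2 \ge 2(1+\sqrt 2) - 2 = 2\sqrt 2 > 0$, with the degenerate exponent case $j = 2$ handled since $(x-2)^{0} = 1$. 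Hence $f_j$ is strictly increasing on $I$.

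For parts (1) and (2) simultaneously, I would evaluate $f_j$ at the two natural endpoints. At $x = 1+\sqrt 2$, using $(\sqrt 2 - 1)(\sqrt 2 + 1) = 1$, one gets $f_j(1+\sqrt 2) = (\sqrt 2 - 1)^{j-1}(1+\sqrt 2) - 1 = (\sqrt 2 - 1)^{j-2} - 1 \le 0$, with equality exactly when $j = 2$. At $x = 3$, $f_j(3) = 1^{j-1}\cdot 3 - 1 = 2 > 0$; more generally $f_j(x) \ge 1\cdot 3 - 1 = 2 > 0$ for every $x \ge 3$, so $f_j$ has no zero in $[3,\infty)$. Continuity on the closed interval $[1+\sqrt 2, 3]$ together with the sign change then produces a zero in $[1+\sqrt 2, 3)$ by the intermediate value theorem, and strict monotonicity of $f_j$ on $I$ makes it the \emph{unique} solution of $f_j(x)=0$ with $x \ge 1+\sqrt 2$. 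Naming it $\beta_j$, this establishes both existence/uniqueness and $1+\sqrt 2 \le \beta_j < 3$.

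For part (3), monotonicity of $\beta_j$ in $j$, the key move I would use is to substitute $\beta_j$ into $f_{j+1}$ rather than into $f_j$. Since $(\beta_j - 2)^{j-1}\beta_j = 1$ by definition,
\[
f_{j+1}(\beta_j) = (\beta_j - 2)^{j}\beta_j - 1 = (\beta_j - 2)\bigl[(\beta_j - 2)^{j-1}\beta_j\bigr] - 1 = (\beta_j - 2) - 1 = \beta_j - 3 < 0,
\]
the last inequality being precisely the upper bound $\beta_j < 3$ just proved. Because $\beta_j \in [1+\sqrt 2, 3) \subseteq I$ and $f_{j+1}$ is strictly increasing on $I$ with $f_{j+1}(\beta_{j+1}) = 0$, the inequality $f_{j+1}(\beta_j) < 0 = f_{j+1}(\beta_{j+1})$ forces $\beta_j < \beta_{j+1}$, and applying this to consecutive indices gives that $\beta_j$ is strictly increasing in $j$.

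I do not expect any single step to be a genuine obstacle — this is an elementary real-analysis argument — but the points that need care are: verifying the sign of $f_j'$ uniformly over all $j \ge 2$ including the exponent-zero case; being precise that ``no zero in $[3,\infty)$'' combined with ``exactly one zero in $[1+\sqrt 2, 3)$'' is what gives a \emph{unique} $\beta_j \ge 1+\sqrt 2$, as the definition requires; and noticing the factorization $(\beta_j-2)^{j}\beta_j = (\beta_j-2)\cdot(\beta_j-2)^{j-1}\beta_j$ that converts the defining relation at index $j$ into a strict inequality for $f_{j+1}$, which is the one mildly non-obvious step.
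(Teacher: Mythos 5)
Your proposal is correct, and for parts (1) and (2) it is essentially the paper's own argument: the same auxiliary function $f_j(x)=(x-2)^{j-1}x-1$, the same derivative computation $f_j'(x)=(x-2)^{j-2}(jx-2)$ showing strict monotonicity on $[1+\sqrt{2},3)$, the same endpoint evaluations $f_j(1+\sqrt 2)=(\sqrt 2-1)^{j-2}-1\le 0$ and $f_j(3)=2$, and the intermediate value theorem plus monotonicity for existence and uniqueness. Where you genuinely diverge is part (3). The paper proves monotonicity of $\beta_j$ in $j$ by contradiction: it assumes indices $c>d$ with $\beta_c<\beta_d$, multiplies and splits the defining identities to get $(\beta_c-2)^{c-1}\beta_c<(\beta_d-2)^{c-d}<1$, contradicting $(\beta_c-2)^{c-1}\beta_c=1$. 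You instead evaluate $f_{j+1}$ at $\beta_j$ and use the defining relation at index $j$ to collapse $f_{j+1}(\beta_j)$ to $\beta_j-3<0$, then invoke the already-established monotonicity of $f_{j+1}$ to conclude $\beta_j<\beta_{j+1}$. Your route is shorter, direct rather than by contradiction, reuses the monotonicity of $f_{j+1}$ you already proved, and yields \emph{strict} increase explicitly; the paper's route compares arbitrary pairs of indices in one step rather than chaining consecutive ones, but both rely on the same two ingredients ($\beta_j<3$ and the defining equation), so the arguments are of comparable depth. No gaps in either part.
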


Then, we continue with some useful lemmas.\\

\begin{lemma}
For $1 \leq k < p \leq n$, we have $\costO{p}{p} \geq \frac{p}{k} \left[\alpO{k}{p}-2\right] \cdot \costO{k}{k}$.

\begin{proof}
Recall that all the facility locations are in a metric space, and thus distances obey the triangle inequality. Similar to the proof for Theorem \ref{theorem:2_gen}, we have 
\[
\begin{aligned}
	\costO{k}{p} &\leq \sum_{i = 1}^{k} d(\aobj{p}{i}, O_k) + k \cdot d(O_k, O_p)\\
	&\leq \sum_{i = 1}^{k} d(\aobj{p}{i}, O_k) + \frac{k}{p} \left[\costO{p}{p} +  \sum_{i = 1}^{p} d(\aobj{p}{i}, O_k)\right]
\end{aligned}
\]
Then, by Lemma \ref{lemma:diff_sum}, we have 
\[
\begin{aligned}
	\costO{k}{p} \leq \costO{k}{k} + \frac{k}{p} \left[\costO{p}{p} +  \costO{p}{k}\right]
\end{aligned}
\]
Now, note that $\frac{p}{k} > 0$, rearrange the above inequality, we can see that
$$\costO{p}{p} \geq \frac{p}{k}\cdot \costO{k}{p} - \frac{p}{k}\cdot \costO{k}{k} - \costO{p}{k}$$
Recall that $\costO{k}{p} = \alpO{k}{p} \cdot \costO{k}{k}$ by definition and since $k < p$, by Lemma \ref{lemma:k_cost} we have $\costO{p}{k} \leq \frac{p}{k}\cdot \costO{k}{k}$, then
\[
\begin{aligned}
	\costO{p}{p} &\geq \frac{p}{k}\cdot \alpO{k}{p} \cdot \costO{k}{k} - \frac{p}{k}\cdot \costO{k}{k} - \frac{p}{k}\cdot \costO{k}{k}\\
	&= \frac{p}{k}\cdot \left[\alpO{k}{p}-2\right] \cdot \costO{k}{k}
\end{aligned}
\]
as desired.
\end{proof}
\label{lemma:ptok}
\end{lemma}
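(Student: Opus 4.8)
The plan is to reuse the triangle-inequality scheme from the proof of Theorem~\ref{theorem:2_gen}, but to be deliberate about which quantity we charge $d(O_k,O_p)$ against: routing it through all $p$ of the farthest clients of $O_p$ (rather than just $k$ of them) is what produces a factor $\frac{p}{k}$ in front of $\costO{k}{k}$ and hence the $\bigl[\alpO{k}{p}-2\bigr]$ shape. First I would start from $\costO{k}{p}=\sum_{i=1}^{k} d(\aobj{p}{i},O_p)$ and apply the triangle inequality termwise through $O_k$ to get $\costO{k}{p}\le \sum_{i=1}^{k} d(\aobj{p}{i},O_k)+k\cdot d(O_k,O_p)$.

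The key second step is to bound $d(O_k,O_p)$ by averaging the inequalities $d(O_k,O_p)\le d(O_k,\aobj{p}{i})+d(\aobj{p}{i},O_p)$ over all $i=1,\dots,p$. Since $\aobj{p}{1},\dots,\aobj{p}{p}$ are by definition the $p$ farthest clients from $O_p$, the sum $\sum_{i=1}^{p} d(\aobj{p}{i},O_p)$ is exactly $\costO{p}{p}$, so this yields $d(O_k,O_p)\le \frac1p\bigl[\sum_{i=1}^{p} d(\aobj{p}{i},O_k)+\costO{p}{p}\bigr]$. Substituting back gives $\costO{k}{p}\le \sum_{i=1}^{k} d(\aobj{p}{i},O_k)+\frac{k}{p}\sum_{i=1}^{p} d(\aobj{p}{i},O_k)+\frac{k}{p}\costO{p}{p}$.

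Next I would apply Lemma~\ref{lemma:diff_sum} twice — once with the $k$ distinct clients $\aobj{p}{1},\dots,\aobj{p}{k}$ to replace the first sum by $\costO{k}{k}$, and once with the $p$ distinct clients $\aobj{p}{1},\dots,\aobj{p}{p}$ to replace the second sum by $\costO{p}{k}$ — obtaining $\costO{k}{p}\le \costO{k}{k}+\frac{k}{p}\bigl[\costO{p}{k}+\costO{p}{p}\bigr]$. Rearranging for $\costO{p}{p}$ gives $\costO{p}{p}\ge \frac{p}{k}\costO{k}{p}-\frac{p}{k}\costO{k}{k}-\costO{p}{k}$. Finally I would substitute $\costO{k}{p}=\alpO{k}{p}\cdot\costO{k}{k}$ and bound $\costO{p}{k}\le \frac{p}{k}\costO{k}{k}$ via Lemma~\ref{lemma:k_cost} (legitimate since $k<p$), which collapses the right-hand side to $\frac{p}{k}\bigl[\alpO{k}{p}-2\bigr]\costO{k}{k}$, as claimed.

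The only real subtlety — the step I would expect to need the most thought — is exactly the averaging choice in the second paragraph: charging $d(O_k,O_p)$ against all $p$ farthest clients of $O_p$ rather than just $k$ of them is what trades a ``$+1$'' for a ``$+\frac{p}{k}$'' in front of $\costO{k}{k}$ and therefore makes the constant come out as $2$. Everything else is routine triangle-inequality bookkeeping plus the two monotonicity lemmas, and notably nothing beyond the triangle inequality is used, so the argument is as elementary as the earlier ones in this section.
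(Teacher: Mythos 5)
Your proposal is correct and follows exactly the same route as the paper's proof: the same termwise triangle inequality, the same averaging of $d(O_k,O_p)$ over all $p$ farthest clients of $O_p$ so that one sum collapses to $\costO{p}{p}$, the same two applications of Lemma~\ref{lemma:diff_sum}, and the same final substitution via Lemma~\ref{lemma:k_cost}. No gaps.
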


\begin{lemma}
For $1 \leq k < p \leq n$, we have $\costO{k}{k} \geq \frac{k}{p} \cdot \alpO{p}{k} \cdot \costO{p}{p}$.

\begin{proof}
Note that since $k < p$, by Lemma \ref{lemma:k_cost} we have $\costO{p}{k} \leq \frac{p}{k}\cdot \costO{k}{k}$ and we have $\costO{p}{k} = \alpO{p}{k} \cdot \costO{p}{p}$ by definition and $\frac{k}{p} > 0$, then we have
$$\costO{k}{k} \geq \frac{k}{p}\cdot \costO{p}{k} = \frac{k}{p}\cdot \alpO{p}{k} \cdot \costO{p}{p}$$
\end{proof}
\label{lemma:ktop}
\end{lemma}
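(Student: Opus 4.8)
The plan is to reduce the claimed inequality to Lemma \ref{lemma:k_cost} evaluated at the single point $A = O_k$, together with the defining identity for the approximation ratio. There is no metric geometry to exploit here beyond what Lemma \ref{lemma:k_cost} already packages, so the whole argument is a two-line substitution; the only ``idea'' is to instantiate the generic lemma at the particular location $O_k$ and to recognize the quantity $\costO{p}{k}/\costO{p}{p}$ as $\alpO{p}{k}$.

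First I would invoke Lemma \ref{lemma:k_cost} with the facility location $A$ taken to be $O_k$. Since $1 \leq k < p \leq n$, that lemma yields $\costO{p}{k} \leq \frac{p}{k}\cdot\costO{k}{k}$. Because $\frac{k}{p} > 0$, this rearranges immediately to
\[
\costO{k}{k} \geq \frac{k}{p}\cdot\costO{p}{k}.
\]
Second, I would unfold the definition of the approximation ratio: by $\alpO{p}{k} = \frac{\costO{p}{k}}{\costO{p}{p}}$ (well-defined under the paper's standing assumption that $\cost{1}{\cdot} > 0$, hence $\costO{p}{p} > 0$), we have $\costO{p}{k} = \alpO{p}{k}\cdot\costO{p}{p}$. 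Substituting this into the displayed inequality gives $\costO{k}{k} \geq \frac{k}{p}\cdot\alpO{p}{k}\cdot\costO{p}{p}$, which is exactly the claim.

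The ``hard part'' is really just choosing the right instantiation, and there is no genuine obstacle: Lemma \ref{lemma:k_cost} does all the work, since comparing the top-$p$ and top-$k$ sums of distances at a fixed location is precisely what it controls. I would note that, symmetrically to Lemma \ref{lemma:ptok}, this lemma is the complementary bound that relates $\costO{k}{k}$ from below to $\costO{p}{p}$ via the known ratio $\alpO{p}{k}$, and both will be fed into the cycle/telescoping analysis for the multi-objective theorems that follow.
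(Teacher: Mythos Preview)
Your proposal is correct and follows exactly the same approach as the paper: instantiate Lemma~\ref{lemma:k_cost} at $A=O_k$ to get $\costO{p}{k}\leq \frac{p}{k}\costO{k}{k}$, rearrange, and substitute $\costO{p}{k}=\alpO{p}{k}\cdot\costO{p}{p}$ from the definition. There is nothing to add.
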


With the above lemmas, the proof for Theorem \ref{theorem:mul_cyc} follows.\\

\begin{proof}[Proof of Theorem \ref{theorem:mul_cyc}]
To show this, suppose that there is a cycle of size $j$, $2 \leq j \leq q$, $|\K| = |V| = q$ such that all of its edge weights are larger than or equal to some $\beta > 1$. For simplicity, denote the edges involved in this cycle by $(O_{k_1}, O_{k_2}), (O_{k_2}, O_{k_3}), \cdots, (O_{k_{j-1}}, O_{k_j}),(O_{k_j}, O_{k_1})$, where $\{k_1, k_2, \cdots, k_j\} \subseteq \K$. So we have $\alpO{k_2}{k_1} \geq \beta, \alpO{k_3}{k_2} \geq \beta, \cdots, \alpO{k_{j}}{k_{j-1}} \geq \beta, \alpO{k_1}{k_j} \geq \beta$. Now, for some $k,p$, by Lemma \ref{lemma:ptok} and Lemma \ref{lemma:ktop}, we have 
\begin{align}
    \costO{k}{k} &\geq \gamma_{p}^{k} \cdot \frac{k}{p} \cdot \costO{p}{p}\\
	\gamma_{p}^{k} &= \begin{cases} 
                \alpO{p}{k} & k < p \\
                \alpO{p}{k}-2 & k > p 
            \end{cases}
\label{ineq1}
\end{align}
Here note that $\alpO{p}{k} > \alpO{p}{k}-2$, so we must have $\gamma_{p}^{k} \geq \alpO{p}{k}-2$. Then, by (1) we have that 
\[
\begin{aligned}
	\costO{k_1}{k_1} &\geq \frac{k_1}{k_2}\cdot \gamma_{k_2}^{k_1} \cdot\costO{k_2}{k_2}\\
	&\geq \frac{k_1}{k_2} \cdot \frac{k_2}{k_3}\cdot \gamma_{k_2}^{k_1} \cdot \gamma_{k_3}^{k_2} \cdot\costO{k_3}{k_3}\\
	&\geq \frac{k_1}{k_2} \cdot \frac{k_2}{k_3}\cdot \cdots \cdot \frac{k_{j-1}}{k_j} \cdot \frac{k_j}{k_1} \cdot \gamma_{k_2}^{k_1} \cdot \gamma_{k_3}^{k_2} \cdot \cdots \cdot \gamma^{k_{j-1}}_{k_j} \cdot \gamma^{k_j}_{k_1} \cdot\costO{k_1}{k_1}\\
	&=  \gamma_{k_2}^{k_1} \cdot \gamma_{k_3}^{k_2} \cdot \cdots \cdot \gamma^{k_{j-1}}_{k_j} \cdot \gamma^{k_j}_{k_1} \cdot\costO{k_1}{k_1}
\end{aligned}
\]
Recall that $\gamma_{p}^{k} \geq \alpO{p}{k}-2$ for any $p,k$, but note that for at least one of the $\gamma_{k_y}^{k_x}$ values above, it must that that $k_x < k_y$, since otherwise we would get that $k_1 > k_1$, a contradiction.
Therefore, there must exist at least one $\gamma_{k_y}^{k_x} = \alpO{k_y}{k_x}$. Without loss of generality, let $\gamma_{k_2}^{k_1} = \alpO{k_2}{k_1}$, then we can see that
\[
\begin{aligned}
	\costO{k_1}{k_1} &\geq \alpO{k_2}{k_1} \cdot \left[\alpO{k_3}{k_2}-2\right] \cdot \cdots \cdot  \left[\alpO{k_j}{k_{j-1}}-2\right] \cdot  \left[\alpO{k_1}{k_j}-2\right] \cdot\costO{k_1}{k_1}\\
	&\geq (\beta - 2)^{j-1}\beta \cdot\costO{k_1}{k_1}
\end{aligned}
\]
Here we know that $\costO{k_1}{k_1} > 0$ by definition, so
$$(\beta - 2)^{j-1}\beta \leq 1$$
Note that we must have $\beta > 1$.
Now, consider $\beta_j$ such that $(\beta_j - 2)^{j-1}\beta_j = 1$, $\beta_j \geq 1+\sqrt{2} > 1$. By Lemma \ref{lemma:beta}, we know that $\beta_j$ must exist and is unique. We then claim that $\beta \leq \beta_j$. To show this, suppose $\beta > \beta_j$. Similar to the proof of Lemma \ref{lemma:beta}, let $f(x) =  (x - 2)^{j-1}x$, then we have $f'(x) = (x-2)^{j-2}(j x-2)$. Consider $x \geq \beta_j$, then we have $x - 2 \geq \beta_j - 2 \geq 1+\sqrt{2} - 2 > 0$ and $j x - 2 \geq 2  (1+\sqrt{2}) - 2 >0$, so $f'(x) > 0 \cdot 0 = 0 $. This means that $f(x)$ is monotone increasing when $x \geq \beta_j$, so $(\beta - 2)^{j-1}\beta > (\beta_j - 2)^{j-1}\beta_j = 1$. But $(\beta - 2)^{j-1}\beta \leq 1$, this is a contradiction. Hence, we must have $\beta \leq \beta_j$.
Recall that we have $\alpO{k_2}{k_1} \geq \beta, \alpO{k_3}{k_2} \geq \beta, \cdots, \alpO{k_{j}}{k_{j-1}} \geq \beta, \alpO{k_1}{k_j} \geq \beta$, but $\beta \leq \beta_j$, this means that we cannot have $\alpO{k_2}{k_1} > \beta_j, \alpO{k_3}{k_2} > \beta_j, \cdots, \alpO{k_{j}}{k_{j-1}} > \beta_j, \alpO{k_1}{k_j} > \beta_j$. In other words, one of the $\alpha's$ has to be smaller than or equal to $\beta_j$. Since the choice of the cycle of size j is arbitrary, we can conclude that there does not exist a cycle of size j such that all of the edges in the cycle have weight strictly larger than $\beta_j$.
\end{proof}

Then, finally, we present the proof for Theorem \ref{theorem:mul_gen}:\\

\begin{proof}[Proof of Theorem \ref{theorem:mul_gen}] 
Recall that in the graph representation $G$, for some node $O_k$, $k \in \K$, every edge that goes out of $O_k$ into some $O_p$ has weight representing the approximation ratio using $O_k$ with respect to objective $c_{p}$, denoted by $\alpO{p}{k}$. Since the graph $G$ is complete, we will show the above theorem by showing that there must exist some $O_k$, $k \in \K$, $|\K| = q$ such that all of the edges leaving $O_k$ have weight less than or equal to $\beta_q$. We will show this by contradiction. Suppose otherwise, for every $k \in \K$, we must have at least one of the edges going out of $O_k$ having weight larger than $\beta_q$. Denote this set of edges by $E'$, note that $|E'| = q$. We will then consider these edges. Here note that since each node would not have an edge going into itself and every edge in $E'$ has a distinct starting node, a subset of $E'$ must be able to form a cycle of size $j \leq q$, denote the cycle by $C_j$. By Theorem \ref{theorem:mul_cyc} we know that $G$ cannot have a cycle of size $j$ such that all of it edge weights are larger than $\beta_j$. However, by Lemma \ref{lemma:beta}, since we have $j \leq q$, we must also have $\beta_q \geq \beta_j$ so all the edges in $C_j$ have weight larger than $\beta_j$, which is a contradiction. Therefore, we can conclude that there must exist some $O_k$, $k \in \K$, such that all of the edges leaving $O_k$ have weight less than or equal to $\beta_q$. This means that choosing $O_k$ would be at worst a $\beta_q$ approximation for every other objective in $\K$.
\end{proof}

Note that unlike Theorem \ref{theorem:mul3}, Theorem \ref{theorem:mul_gen} shows that an outcome can always be chosen with approximation ratio better than 3 for every other objective (see Figure \ref{fig:mul_ub}). This is true since we have shown that $\beta_q < 3$ in Lemma \ref{lemma:beta}. 
In fact, note that $\beta_2 = 1 + \sqrt{2}$, which matches with the tight bound we obtained in Theorem \ref{theorem:2_bound} for approximating two simultaneous objectives in the case where the difference between the two objectives is allowed to be arbitrary. As the number of objectives we care about grows, so does the approximation factor, and it can become strictly larger than $1+\sqrt{2}$, as we show in the following proposition:

\begin{proposition}
	There exists an instance with 3 objectives such that all possible facility locations in $\F$ result in at least $\beta_3$ approximation for at least one of the three objectives.
	
\begin{proof}
	We will show this by constructing an example. Assume $\K = \{1, k, n\}$, with $1 < k < n$, $|\C| = n$. Consider three locations $A$, $B$ and $C$, where there is one client located at $A$, $k-1$ clients located at $B$ and $n-k$ clients located at $C$. Note that $\beta_3 = \frac{1}{2}(3 + \sqrt{5})$, then consider three more locations $O_1$, $O_k$ and $O_n$ with distances to $A$, $B$ and $C$ as shown in Table \ref{tab:3bound1} below:
	
\begin{table}[H]
\centering
\begin{tabular}{|c|ccc|}
\hline
 \cline{2-4} 
\multicolumn{1}{|l|}{$O$}                  & \multicolumn{1}{c|}{$d(O, A)$}      & \multicolumn{1}{c|}{$d(O, B)$}              & $d(O, C)$ \\ \hline
$O_n$                                   & \multicolumn{1}{c|}{$\frac{1}{2}(1 + \sqrt{5})$} & \multicolumn{1}{c|}{$\frac{1}{2}(1 + \sqrt{5})$} & $\frac{1}{2}(3 - \sqrt{5})$  \\ \hline
$O_k$                                   & \multicolumn{1}{c|}{$\frac{1}{2}(3 + \sqrt{5})$}           & \multicolumn{1}{c|}{$\frac{1}{2}(\sqrt{5} - 1)$}                   & $\frac{1}{2}(\sqrt{5} - 1)$      \\ \hline
$O_1$                                   & \multicolumn{1}{c|}{1}                   & \multicolumn{1}{c|}{1}                             & 1                \\ \hline
\end{tabular}
\caption{An instance for Proposition \ref{prop:3ub}, with distances between $O_1, O_k, O_n$ and $A, B, C$ listed.}
\label{tab:3bound1}
\end{table}
\noindent
Here note that we have $d(O_n, A) = d(O_n, B) > d(O_n, C)$, $d(O_k, A) > d(O_k, B) = d(O_k, C)$ and $d(O_1, A) = d(O_1, B) = d(O_1, C)$.
Assume that $k$ and $n$ are sufficiently large such that $k-1 \approx k, n - k \approx n$ so we have the (approximate) costs as shown in Table \ref{tab:3bound2} below:
\begin{table}[H]
\centering
\begin{tabular}{|c|ccc|}
\hline
 \cline{2-4} 
\multicolumn{1}{|l|}{$O$}                  & \multicolumn{1}{c|}{$\costO{1}{}$}      & \multicolumn{1}{c|}{$\costO{k}{}$}              & $\costO{n}{}$ \\ \hline
$O_n$                                   & \multicolumn{1}{c|}{$\frac{1}{2}(1 + \sqrt{5})$} & \multicolumn{1}{c|}{$\frac{k}{2}(1 + \sqrt{5})$} & $\frac{n}{2}(3 - \sqrt{5})$  \\ \hline
$O_k$                                   & \multicolumn{1}{c|}{$\frac{1}{2}(3 + \sqrt{5})$}           & \multicolumn{1}{c|}{$\frac{k}{2}(\sqrt{5} - 1)$}                   & $\frac{n}{2}(\sqrt{5} - 1)$      \\ \hline
$O_1$                                   & \multicolumn{1}{c|}{1}                   & \multicolumn{1}{c|}{$k$}                             & $n$                \\ \hline
\end{tabular}
\caption{An instance for Proposition \ref{prop:3ub}, with $c_1$, $c_k$ and $c_n$ listed for $O_1, O_k$ and $O_n$.}
\label{tab:3bound2}
\end{table}
\noindent
Note that in this case we have $\C = \{A, B, C\}$ and $\F = \{O_1, O_k, O_n\}$. To construct an example that matches Table \ref{tab:3bound1}, we will consider an instance as shown in Figure \ref{fig:3_ub} below:

\begin{figure}[H]
  \begin{center}
  \begin{tikzpicture}[
            roundnode/.style={circle, draw=green!60, fill=green!5, very thick, minimum size=7mm},
            squarednode/.style={rectangle, draw=blue!60, fill=blue!5, very thick, minimum size=5mm},
            scale = 1.5, transform shape
        ]

            \node[squarednode] (k) {$O_k$};
            \node[roundnode] (B) [right = 2cm of k]{$B$};
            \node[roundnode] (A) [right = 2cm of B]{$A$};
            \node[roundnode] (C) [below = 2cm of k]{$C$};
            \node[squarednode] (n) [right = 2cm of C]{$O_n$};
            \node[squarednode] (i) [right = 2cm  of n]{$O_1$};
            
            \draw (k) -- (C) node[draw=none,fill=none,font=\scriptsize,midway,left ] {$\frac{1}{2}\left(\sqrt{5}-1\right)$};
            \draw (k) -- (B) node[draw=none,fill=none,font=\scriptsize,midway, above] {$\frac{1}{2}\left(\sqrt{5}-1\right)$};
            \draw (C) -- (n) node[draw=none,fill=none,font=\scriptsize,midway, below] {$\frac{1}{2}\left(3 - \sqrt{5}\right)$};
            \draw (n) -- (i) node[draw=none,fill=none,font=\scriptsize,midway,below] {$\frac{1}{2}\left(\sqrt{5}-1\right)$};
            \draw (A) -- (i) node[draw=none,fill=none,font=\scriptsize,midway,right] {$1$};
            \draw (i) -- (B) node[draw=none,fill=none,font=\scriptsize,midway, right] {$1$};
            
            \draw (B) -- (n) node[draw=none,fill=none,font=\scriptsize,midway,below right] {$\frac{1}{2}\left(1+\sqrt{5}\right)$};
            \draw (B) -- (C) node[draw=none,fill=none,font=\scriptsize,midway,left] {$\sqrt{5}-1$};
            \draw (B) -- (A) node[draw=none,fill=none,font=\scriptsize,midway,above left] {2};
        \end{tikzpicture}
  \caption{An instance with three client locations $A, B, C$ and three possible facility locations $O_1, O_k, O_n$ with distances between any two adjacent locations labeled.}
  \label{fig:3_ub}
  \end{center}
\end{figure}
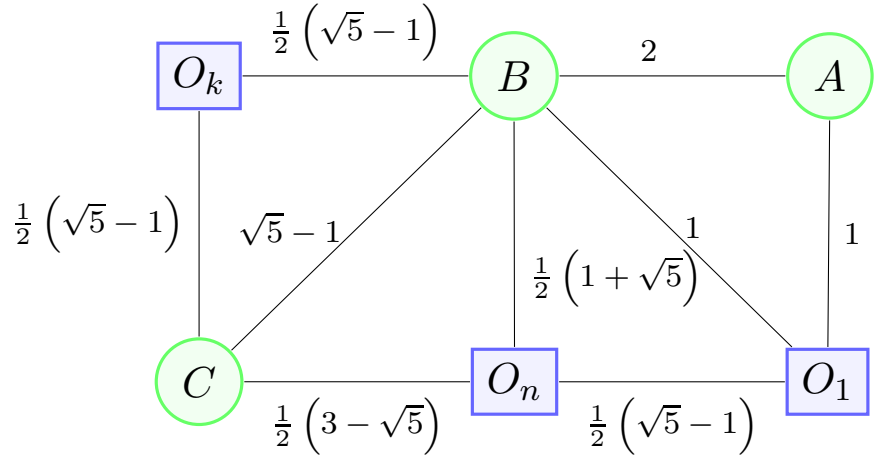
\noindent
Note that we have $d(O_k, C) = \frac{1}{2}(\sqrt{5}-1), d(O_n, C) = \frac{1}{2}(3-\sqrt{5}), d(O_n, O_1) = \frac{1}{2}(\sqrt{5}-1), d(O_1, k) = 1$ and $d(O_k, B) = \frac{1}{2}(\sqrt{5}-1), d(C, B) = \sqrt{5}-1, d(O_n, B) = \frac{1}{2}(1+\sqrt{5}), d(O_1, B) = 1, d(A, B) = 2$, we will consider the graph metric implied in Figure \ref{fig:3_ub} such that $d(O_k, O_n) = d(O_k, C) + d(C, O_n) = 1, d(O_k, O_1) = d(O_k, O_n) + d(O_n, O_1) = \frac{1}{2}(1+\sqrt{5}), d(O_k, A) = d(O_k, O_1) + d(O_1, A) = \frac{1}{2}(3+\sqrt{5}), d(C, O_1) = d(C,O_n) + d(O_n, O_1) = 1, d(C, A) = d(C,O_1) + d(O_1, A) = 2, d(O_n, A) = d(O_n, O_1) + d(O_1, A) = \frac{1}{2}(1+\sqrt{5})$. Note that these values match the values in Table \ref{tab:3bound1}. In addition, these distances satisfy triangle inequality thus the example is valid under graph metric.

Then, using the values listed in Table \ref{tab:3bound2}, note that $\costO{1}{1} < \costO{1}{n} < \costO{1}{k}, \costO{k}{k} < \costO{k}{1} < \costO{k}{n}, \costO{n}{n} < \costO{n}{k} < \costO{n}{1}$, so $O_1$ is the optimal location for $c_1$, $O_k$ is the optimal location for $c_k$ and $O_n$ is the optimal location for $c_n$. Hence we can see that
\[
\begin{aligned}
	\alpO{1}{k} &= \frac{\costO{1}{k}}{\costO{1}{1}} = \frac{\frac{1}{2}(3+\sqrt{5})}{1} =  \frac{1}{2}(3+\sqrt{5}) = \beta_3 > \frac{\costO{1}{n}}{\costO{1}{1}} = \alpO{1}{n}\\
	\alpO{k}{n} &= \frac{\costO{k}{n}}{\costO{k}{k}} = \frac{\frac{k}{2}(1+\sqrt{5})}{\frac{k}{2}(\sqrt{5}-1)} =  \frac{1}{2}(3+\sqrt{5}) = \beta_3 > \frac{\costO{k}{1}}{\costO{k}{k}} = \alpO{k}{1}\\
	\alpO{n}{1} &= \frac{\costO{n}{1}}{\costO{n}{n}} = \frac{n}{\frac{n}{2}(3 - \sqrt{5})} = \frac{1}{2}(3+\sqrt{5}) = \beta_3 > \frac{\costO{n}{k}}{\costO{n}{n}} = \alpO{n}{k}
\end{aligned}
\]
This means that by picking any of the facility locations in $\F$, we will obtain a result that is at least as bad as $\beta_3$ approximation for one of the objectives, as desired.
\end{proof}
\label{prop:3ub}
\end{proposition}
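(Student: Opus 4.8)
The plan is to prove tightness by exhibiting a single worst-case instance, obtained by reverse-engineering the chain of inequalities in the proof of Theorem~\ref{theorem:mul_cyc}. Recall that there the bound $(\beta-2)^{j-1}\beta\le 1$ arose from composing, around a length-$j$ cycle in $G$, one application of Lemma~\ref{lemma:ktop} (contributing a factor $\alpha$) with $j-1$ applications of Lemma~\ref{lemma:ptok} (each contributing a factor $\alpha-2$), together with the telescoping of the ratios $k_i/k_{i+1}$. For $j=3$ this tells me to look for a three-point instance in which every one of these inequalities — all the triangle inequalities, Lemma~\ref{lemma:diff_sum}, and Lemma~\ref{lemma:k_cost} — holds with equality, and in which the three cross approximation ratios around the cycle are all exactly $\beta_3$, where $\beta_3$ is the root of $(\beta_3-2)^2\beta_3=1$ with $\beta_3\ge 1+\sqrt2$. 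One checks directly that $\beta_3=\tfrac12(3+\sqrt5)$ and $\beta_3-2=\tfrac12(\sqrt5-1)$, and that these satisfy the defining identity; this is the algebraic fact that will make the construction close up.

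Concretely, I would take $\K=\{1,k,n\}$ with $1<k<n$ and $|\C|=n$, placing one client at a point $A$, $k-1$ clients at a point $B$, and $n-k$ clients at a point $C$, and take $k$ and $n$ large so that $k-1\approx k$ and $n-k\approx n$ (the exact statement then follows by letting $k,n\to\infty$, since the centrum costs become clean multiples of $1$, $k$, and $n$). I would then introduce three candidate facilities $O_1,O_k,O_n$ and prescribe their distances to $A,B,C$ — precisely the entries of Table~\ref{tab:3bound1} — chosen so that $O_1$ is equidistant from all clients, $O_k$ is farther from $A$ than from $B$ and $C$, and $O_n$ is far from $A$ and $B$ but close to $C$. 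These distances are exactly the quantities one solves for by imposing $\alpO{1}{k}=\alpO{k}{n}=\alpO{n}{1}=\beta_3$.

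Next I would exhibit a genuine metric realizing these distances by taking the shortest-path (graph) metric on the weighted graph of Figure~\ref{fig:3_ub}; the only verification needed is that the prescribed distances really are shortest-path distances — i.e., that they are recovered as sums of edge weights along the indicated paths and that no shorter route exists — which is a short computation with the numbers $\tfrac12(\sqrt5\pm1)$ and $\tfrac12(3-\sqrt5)$, and which then makes all triangle inequalities automatic. Using the bunched-client structure (noting which client is farthest from each candidate, so that the ``top-$1$'' and ``top-$k$'' sums are read off correctly), I would compute $c_1$, $c_k$, and $c_n$ at each of $O_1,O_k,O_n$, producing Table~\ref{tab:3bound2}, and then read off its three columns to confirm that $O_1$ minimizes $c_1$, $O_k$ minimizes $c_k$, and $O_n$ minimizes $c_n$, so that the denominators $\costO{1}{1}$, $\costO{k}{k}$, $\costO{n}{n}$ are genuine optima.

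Finally I would evaluate the three ratios $\alpO{1}{k}=\costO{1}{k}/\costO{1}{1}$, $\alpO{k}{n}=\costO{k}{n}/\costO{k}{k}$, and $\alpO{n}{1}=\costO{n}{1}/\costO{n}{n}$, each of which comes out to $\tfrac12(3+\sqrt5)=\beta_3$, while the other out-edge of each node ($\alpO{1}{n}$, $\alpO{k}{1}$, $\alpO{n}{k}$) is strictly smaller; since $\F=\{O_1,O_k,O_n\}$, every possible choice of facility is the tail of one of these three $\beta_3$-weight edges and hence is at least a $\beta_3$-approximation for at least one objective in $\K$. The main obstacle is the reverse-engineering in the previous two paragraphs: simultaneously choosing distances that form a valid metric, that make each $O_j$ truly optimal for $c_j$, and that drive all three cross-ratios to exactly $\beta_3$. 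Once those numbers are in hand — and the identity $(\beta_3-2)^2\beta_3=1$ is exactly what guarantees they can be — the rest is bookkeeping.
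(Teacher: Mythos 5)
Your construction is exactly the one in the paper: the same instance with $\K=\{1,k,n\}$, clients bunched at $A$, $B$, $C$, the distances of Table~\ref{tab:3bound1} realized as a shortest-path graph metric, and the three cross-ratios $\alpO{1}{k}=\alpO{k}{n}=\alpO{n}{1}=\tfrac12(3+\sqrt5)=\beta_3$ forcing every facility in $\F$ to be at least a $\beta_3$-approximation for some objective. The proposal is correct and takes essentially the same approach as the paper.
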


\section{Choosing a Facility Location from Client Locations}
\label{section:same2}

So far in this paper we considered the general case when only some set of locations $\F$ allow the building of a facility. It is also interesting to consider the easier case, as was done in much of existing work \cite{chakrabarty2019approximation, alamdari2017bicriteria, chakrabarty2017interpolating}, when facilities are allowed to be built at any client location, i.e., when $\C = \F$. A lot of results become simpler with this assumption; for example it is easy to see that choosing any client location is immediately a 2-approximation for the $c_1$ (max distance) objective. Thus, choosing $O_n$ immediately gives a $(2,1)$ approximation for the $c_1$ (max) and $c_n$ (sum) objectives, while for general $\F$ we have shown that nothing better than $1+\sqrt{2}$ simultaneous approximation is possible. 
In fact, as an easy extension to the results shown in \cite{gkatzelis2020resolving} (using the fact that when $\C = \F$, {\em decisiveness} as defined in that paper equals zero, see Proposition 6 in the arXiv version of their paper), we can always get a 2 simultaneous approximation for arbitrarily many \emph{l-centrum} objectives. This is instead of the 3-approximation bound which we know holds for the general case when $\C \neq \F$.

In addition to the above result,  we can further extend our analysis from Section \ref{section:2_obj} to the case when $\C = \F$. For simultaneously approximating two objectives, combined with the result implied in \cite{gkatzelis2020resolving}, when $\C = \F$, the simultaneous approximation bound is the same as in Figure \ref{fig:2_ub} for $\frac{p}{k}\leq 4$, but for $\frac{p}{k}>4$ it simply equals 2. 
It is not difficult to show that this bound is tight, using similar examples as our bounds from Section \ref{section:2_obj} for general facility locations $\F$.


Note that unlike all the previous results in this paper, the outcome that is chosen may {\em not} one of the optimum outcomes for the objectives in $\K$; to obtain the best simultaneous approximation it is often necessary to choose an outcome which is sub-optimal for all individual objectives.

\section{Conclusion}
We have shown that, when selecting a facility according to multiple competing interests, it is always possible to form an outcome approximating several competing objectives, at least as long as these objectives are one of the $l$-centrum objectives. For instance, both minimizing the maximum cost and minimizing the total cost can be simultaneously approximated within a ratio of $1+\sqrt{2}$. We can in fact extend the obtained upper bound of the approximation ratio to a broader range of problems. As discussed in the Introduction, if we can get an $\alpha$ simultaneous approximation ratio for a set of \emph{l-centrum} objectives $\K$, then we can get an $\alpha$-approximation for the corresponding \emph{ordered 1-median} problems such that their objectives can be represented as convex combinations of the objectives in $\K$. This implies that there always exists a 3 approximation for all the \emph{ordered 1-median} problems, and in fact all convex combinations of any $q$ $\l$-centrum objectives can be simultaneously approximated within our ratio $\beta_q$, with for example $\beta_2=1+\sqrt{2}$.

However, there still exist questions left unanswered. For example, we do not know if the upper bound of the simultaneous approximation ratio for more than 3 objectives is tight, or if a better approximation is possible. 
More generally, it would be interesting to see if other types of objectives can be simultaneously approximated for these facility location and voting settings. 

\subsection*{Acknowledgements} This work was partially supported by NSF award CCF-2006286.

\bibliography{refs}

\end{document}